\documentclass{article}
\usepackage{iclr2027_conference,times}

\usepackage{amsmath,amsfonts,bm}

\def\eqref#1{equation~\ref{#1}}

\def\1{\bm{1}}

\DeclareMathAlphabet{\mathsfit}{\encodingdefault}{\sfdefault}{m}{sl}
\SetMathAlphabet{\mathsfit}{bold}{\encodingdefault}{\sfdefault}{bx}{n}

\usepackage[utf8]{inputenc}
\usepackage[T1]{fontenc}
\usepackage{hyperref}
\usepackage{url}
\usepackage{booktabs}
\usepackage{amsfonts}
\usepackage{amssymb}
\usepackage{amsmath}
\usepackage{amsthm}
\usepackage{nicefrac}
\usepackage{microtype}
\usepackage{xcolor}
\usepackage{graphicx}
\usepackage{caption}
\usepackage{subcaption}

\newtheorem{proposition}{Proposition}
\newtheorem{lemma}{Lemma}
\newtheorem{corollary}{Corollary}
\theoremstyle{remark}
\newtheorem{remark}{Remark}
\theoremstyle{plain}

\title{CyFM: Cylindrical Optimal Transport for\\Few-Step Complex-Valued Flow Matching}

\author{Marcel Musia{\l}ek \quad Iga Wolanin \quad Damian Ryczko \quad Anna Grelewska \quad Oleksii Furman \\
Wroc{\l}aw University of Science and Technology \\
Wroc{\l}aw, Poland \\
\texttt{\{marcel.musialek, 281203, 281188, 279694\}@student.pwr.edu.pl} \\
\texttt{oleksii.furman@pwr.edu.pl}
}

\iclrfinalcopy 
\begin{document}

\maketitle
\lhead{Preprint}

\begin{abstract}
Complex-valued signals, such as Magnetic Resonance Imaging (MRI) and audio spectrograms, are almost universally modelled as flat two-channel Euclidean data. Each value lies in $\mathbb{C}$, and away from the origin the amplitude--phase chart $z \mapsto (|z|, z/|z|)$ identifies the punctured plane with the open cylinder $(0, \infty) \times S^1$. We model on its closure $[0, \infty) \times S^1$, where the origin is covered by an entire boundary circle. The inherited Euclidean metric $dA^2 + A^2 d\theta^2$ does not extend to that boundary: its angular term vanishes with the amplitude, giving that circle zero length and leaving phase unpenalised exactly where the signal is weakest. We therefore deliberately replace it with the decoupled product metric $dA^2 + d\theta^2$, which stays non-degenerate at $A = 0$. In this empirical study, we measure what that substitution costs and what it buys. Exact analytical bridges computed across three distinct data sources (synthetic copula fields, coil-combined fastMRI knee acquisitions, and LibriSpeech spectrograms) demonstrate that Cartesian paths induce a heavy-tailed distribution of angular velocity with a Pareto tail index of $\approx 1$. Under independent coupling, $43\%$--$49\%$ of the total signal energy falls on paths that turn faster than $\pi$ rad per unit time, a rotational speed no cylindrical path ever reaches. To remove that unbounded target, we formulate Cylindrical Flow Matching (CyFM), which strictly bounds the angular regression target, and couple noise and data via exact minibatch Optimal Transport computed jointly over whole fields in the cylindrical metric. On synthetic fields, this joint coupling reduces few-step generation error by $3\%$--$60\%$. Against the strongest Cartesian baseline, CyFM achieves lower generative error at every integration step up to $k = 8$ on synthetic fields at every resolution from $16\times16$ to $64\times64$ and on real speech spectrograms, with all five seeds separated and without distillation; on knee MRI the single-step advantage is $1.8\times$ and the ordering beyond it depends on whether the measure sees spatial structure. At convergence ($k = 100$), we detect no significant difference between the two geometries. Finally, a prior-only control exposes how costly the flat parametrisation is in this regime: on synthetic fields, a single Cartesian Euler step performs worse than the unintegrated noise prior ($0.376$ vs.\ $0.150$).
\end{abstract}

\section{Introduction}
\label{sec:intro}

Complex-valued physical signals $z=x+iy=Ae^{i\theta}\in\mathbb{C}$, including magnetic resonance imaging (MRI) and audio spectrograms, encode structural and physical information jointly in amplitude and phase. Yet generative models commonly flatten this geometry into two Cartesian channels, obscuring the circular structure of phase and the special role of the origin. Flow Matching \citep{lipman2023flow, albergo2023building, liu2023flow} provides a natural setting in which to address this mismatch. It learns continuous-time generative flows without simulating trajectories during training and extends naturally to Riemannian manifolds \citep{chen2024flow, mathieu2020riemannian}.

Under this Cartesian representation, $(\mathrm{Re}(z), \mathrm{Im}(z)) \in \mathbb{R}^2$, a straight probability path $z(t) = (1-t)z_0 + t z_1$ satisfies $|z(t)| \le (1-t)|z_0| + t|z_1|$ by the triangle inequality. When endpoints of similar amplitude have nearly opposite phases, the path passes close to the origin and its amplitude falls far below that bound. The Cartesian target velocity $z_1-z_0$ remains bounded for bounded endpoints, but the path's angular velocity can become arbitrarily large near the origin. Because each solver step requires a network evaluation, generating accurately with a few steps reduces sampling latency and inference cost. Coarse integration therefore provides a practical test of whether this path behavior matters. Figure~\ref{fig:teaser_flow} illustrates the geometric mismatch.

\begin{figure}[t]
    \centering
    \begin{subfigure}[b]{0.48\textwidth}
        \centering
        \includegraphics[height=2.9cm, keepaspectratio]{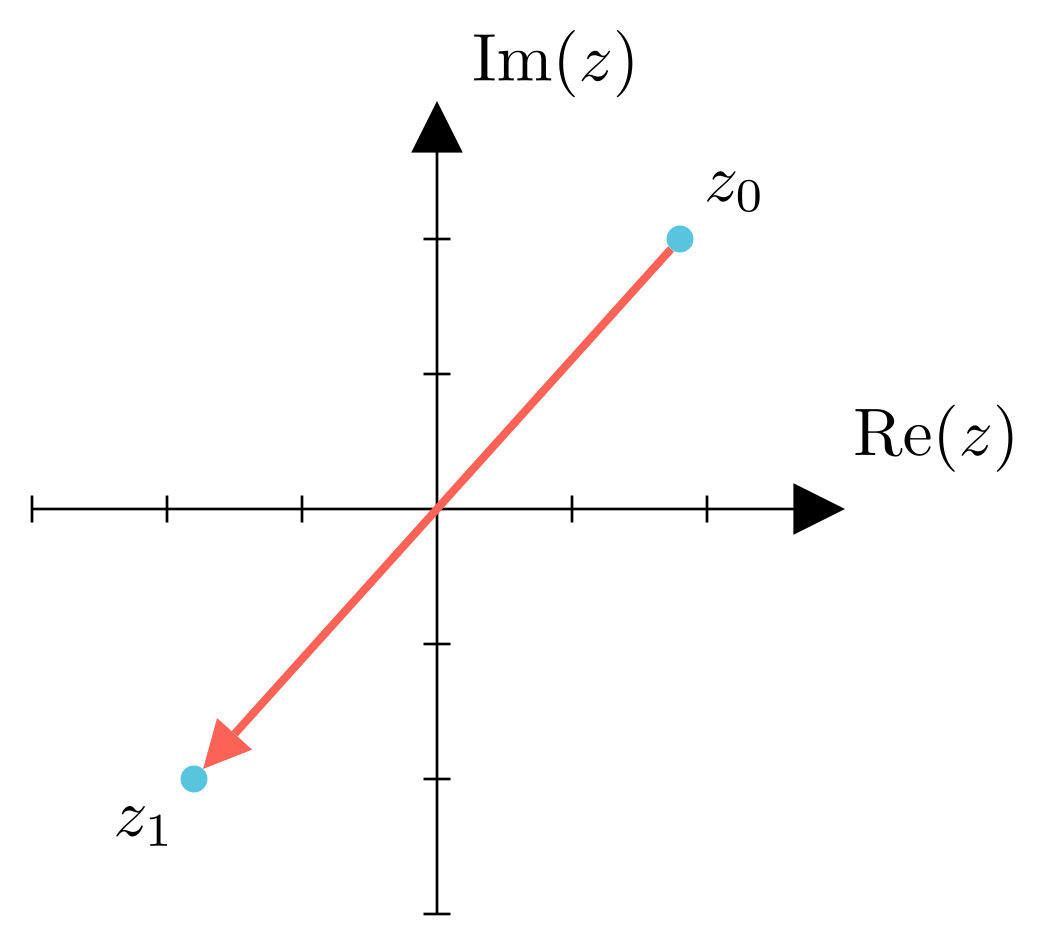}
        \caption{Standard Euclidean Flow ($\mathbb{R}^2$)}
        \label{fig:flow_euclidean}
    \end{subfigure}
    \hfill
    \begin{subfigure}[b]{0.48\textwidth}
        \centering
        \includegraphics[height=2.9cm, keepaspectratio]{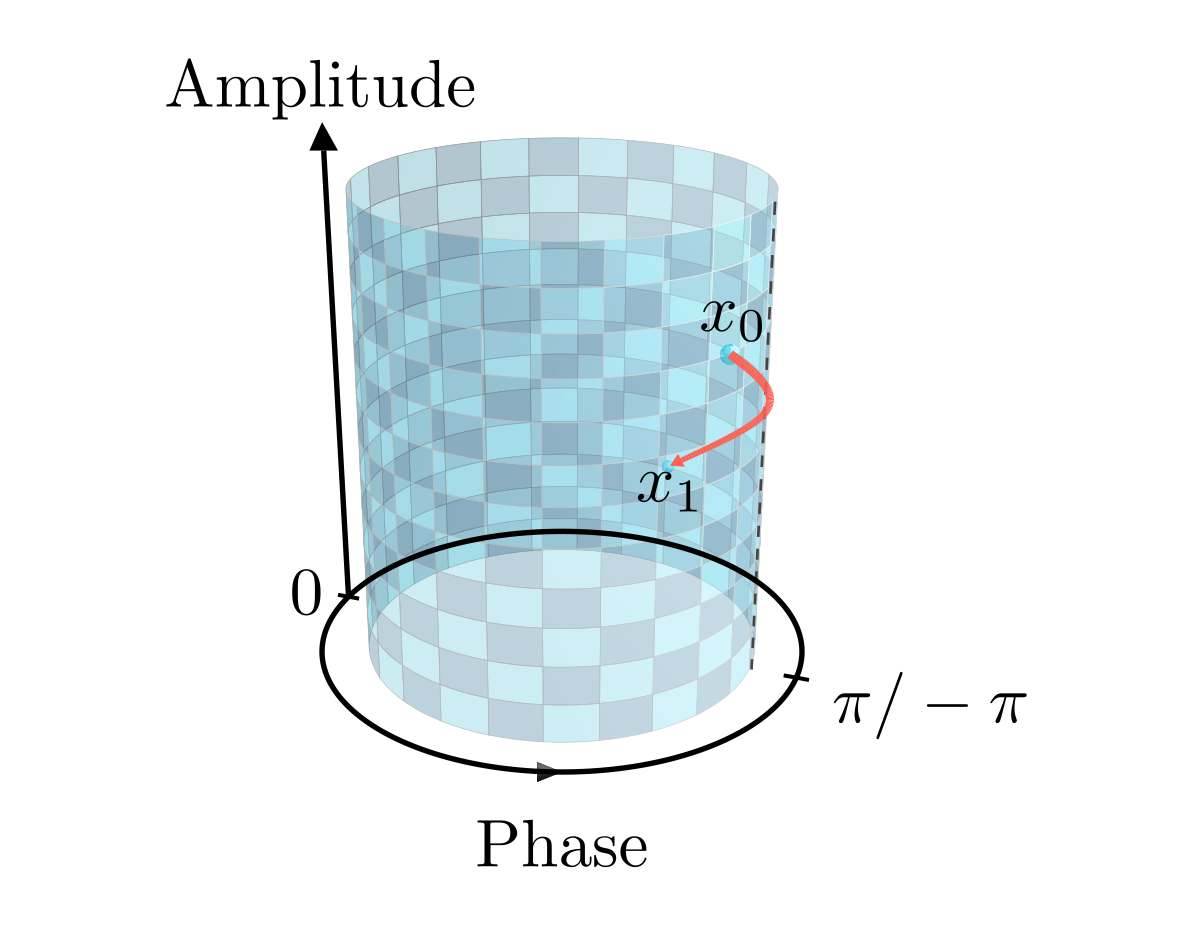}
        \caption{Cylindrical Flow Matching (Ours)}
        \label{fig:flow_cylindrical}
    \end{subfigure}
    \caption{Geometric mismatch in complex-valued generative flows. \textbf{(a)} In the Euclidean complex plane $(\mathrm{Re}(z), \mathrm{Im}(z))$, linear probability trajectories pass close to the origin, inducing severe midpoint amplitude attenuation ($|z_{1/2}| \ll \frac{1}{2}(|z_0| + |z_1|)$). \textbf{(b)} Cylindrical Flow Matching works in amplitude--phase coordinates on the cylindrical closure $[0, \infty) \times S^1$ under the decoupled product metric, interpolating amplitude directly and transporting phase along the shortest circular geodesic.}
    \label{fig:teaser_flow}
    \vspace{-2mm}
\end{figure}

Decoupled cylindrical or polar metrics suggest an alternative to Cartesian paths \citep{chen2026dpfm, wang2026disrfm}, but extending them to raw, high-dimensional complex fields also makes endpoint coupling part of the geometric problem: the phase differences, amplitude attenuation, and angular velocities encountered along a bridge depend on which noise and data samples are paired. Minibatch Optimal Transport (OT) can suppress costly pairings, but its advantage may diminish as field dimension grows, while factorizing the assignment across coordinates or spatial patches risks breaking dependencies within each field. This leads to two questions. First, does a bounded angular target improve generation when only a few solver steps are available? Second, how does the benefit of whole-field OT change as field dimension grows, and what spatial structure is lost when the assignment is factorized?

We study these questions with \textbf{Cylindrical Flow Matching (CyFM)} on $\mathcal{M}=[0,\infty)\times S^1$ under a decoupled amplitude--phase metric. CyFM follows the shortest circular phase path, which bounds its target angular velocity, and uses exact minibatch OT to pair noise and data jointly over whole fields. We first analyze exact probability paths, then evaluate trained models on synthetic complex fields, speech STFT spectrograms, and raw complex fastMRI knee wavefields. Finally, we examine how joint OT scales with field size and what fails when the assignment is split across coordinates or spatial patches. Across synthetic fields and speech, CyFM consistently improves few-step generation. On knee MRI, it leads on the pooled metric at one step and on spatial measures from four steps onward.

Our contributions are as follows:
\begin{itemize}
    \item We formulate CyFM with a decoupled amplitude--phase metric and prove that its target angular velocity is bounded by $\pi$ for any coupling. We also derive the peak angular velocity of Cartesian bridges and its index-$1$ Pareto tail for equal-amplitude endpoints with uniform phase differences.
    \item We adapt exact minibatch OT to pair complete complex fields and measure its behavior as field size grows. Joint OT reduces CyFM's few-step error at every tested synthetic resolution, even as its relative transport-cost advantage narrows with dimension. We further show that factorizing the assignment is not a safe shortcut: assignments across amplitude, phase, or patches can change the joint target distribution, and patchwise matching disrupts correlations across patch boundaries.
    \item We test the geometric predictions on exact paths from synthetic fields, speech, and knee MRI, then compare trained flows under matched budgets. CyFM has lower pooled sliced $W_2$ at every $k \le 8$ on synthetic fields and speech, with all five seeds separated. On $64\times64$ knee MRI crops, CyFM leads on the pooled metric at $k=1$ and on spatial measures from $k=4$, revealing complementary gains across evaluation criteria.
\end{itemize}

\section{Related Work}
\label{sec:related_work}

\textbf{Flow Matching and Riemannian Manifolds.} Continuous Normalizing Flows (CNFs) parametrise generative trajectories via neural ODEs \citep{chen2018neural}. Flow Matching (FM) \citep{lipman2023flow, albergo2023building, liu2023flow} introduced simulation-free regression onto closed-form target paths, with minibatch Optimal Transport (OT) further straightening trajectories \citep{pooladian2023multisample, tong2023improving}. Riemannian Flow Matching (RFM) \citep{chen2024flow} generalized this to Riemannian manifolds. Recently, decoupled polar product metrics ($\mathbb{R}^+ \times S^{d-1}$) have appeared in latent feature alignment, such as Direct Product Flow Matching (DP-FM) \citep{chen2026dpfm} for vision-language models and DisRFM \citep{wang2026disrfm} for graph domain adaptation. However, applying product manifolds directly to high-dimensional raw complex wavefields involves unique spatial coherence and OT scaling dynamics that remain unexplored.

\textbf{Generative Models for Complex-Valued Signals.} Physical wavefields in MRI and audio STFT inherently require modelling amplitude and phase \citep{bernstein2004handbook, trabelsi2018deep}. While complex convolutions have been explored in supervised settings \citep{cole2021analysis}, modern generative frameworks (including Score-Based Diffusion, \citealp{song2020score, chung2022score}) typically operate on amplitude alone or embed complex numbers as flat Cartesian channels $(\mathrm{Re}, \mathrm{Im}) \in \mathbb{R}^2$, inheriting planar geometry. Recent complex generative models either project phase into a learned Euclidean latent space \citep{schlimbach2026complex} or default to flat Cartesian interpolation \citep{rempe2025phasegen, choi2025universr}. As we demonstrate, Cartesian interpolation structurally induces origin singularities and extreme angular velocities.

\textbf{Optimal Transport Couplings and Factorisation Traps.} Minibatch OT pairings straighten probability paths in Euclidean space \citep{tong2023improving, pooladian2023multisample} and on structured manifolds \citep{bose2024foldflow, kapusniak2024metric, boite2026minibatch}. To alleviate the quadratic cost of batch OT, alternative couplings such as Quantile Coupling (QC-FM, \citealp{kim2026onesided}) pair samples via one-dimensional projections. A tempting shortcut in high dimensions is to factorise transport independently across coordinates (amplitude/phase) or spatial patches. We prove and demonstrate empirically that factorising coupling reassembles independent marginals from disjoint samples, destroying joint copulas and spatial seams (Section~\ref{sec:experiments}).

\section{Preliminaries and Geometric Foundations}
\label{sec:preliminaries}

\subsection{Flow Matching on Euclidean Spaces}
Let $p_0$ denote a tractable base distribution on $\mathbb{R}^d$, such as a standard Gaussian $\mathcal{N}(0, I)$ or a uniform prior, and let $p_1$ denote an empirical data distribution. Continuous Normalizing Flows model continuous-time transport between $p_0$ and $p_1$ through a time-dependent vector field $v_t: [0, 1] \times \mathbb{R}^d \to \mathbb{R}^d$, defining a flow $\psi_t: \mathbb{R}^d \to \mathbb{R}^d$ via the ordinary differential equation (ODE) $\frac{d}{dt} \psi_t(x) = v_t(\psi_t(x))$ with $\psi_0(x) = x$. Conditional Flow Matching (CFM) \citep{lipman2023flow, albergo2023building, liu2023flow}  bypasses the intractable marginal vector field by constructing target conditional vector fields $u_t(x_t|x_0, x_1) = x_1 - x_0$ along linear probability paths $x_t = (1 - t) x_0 + t x_1$. Then, a neural network $v_\phi(t, x)$ is trained through the simulation-free objective:
\begin{equation}
    \mathcal{L}_{\mathrm{CFM}}(\phi) = \mathbb{E}_{t \sim \mathcal{U}[0, 1],\, x_0 \sim p_0,\, x_1 \sim p_1} \left[ \| v_\phi(t, x_t) - u_t(x_t | x_0, x_1) \|^2 \right].
\end{equation}
Once trained, new samples are generated by drawing $x_0 \sim p_0$ and integrating $v_\phi$ from $t=0$ to $t=1$ with standard numerical ODE solvers.

\subsection{The Geometry of Complex-Valued Fields: Cylinder vs. Polar Singularity}
For complex-valued physical fields, such as Magnetic Resonance Imaging (MRI) or audio spectrograms, each spatial pixel has an amplitude $A \in [0, \infty)$ and a phase $q = e^{i\theta} \in S^1$. When local coordinates are needed, the phase is
represented by the principal angle $\theta \in [-\pi, \pi)$ and treated as a choice of representatives for $S^1 \cong \mathbb{R} / 2\pi\mathbb{Z}$, rather than a homeomorphic copy of it. Writing $\mathbb{R}^+ = (0, \infty)$, the punctured plane is
diffeomorphic to the open cylinder $\mathbb{C}^* = \mathbb{C} \setminus \{0\} \cong \mathbb{R}^+ \times S^1$. CyFM operates on its closure $[0, \infty) \times S^1$.

The standard polar coordinates on $\mathbb{C}^*$ express the Euclidean metric as $g_{\mathrm{polar}} = dA^2 + A^2 d\theta^2$, which is flat for $A > 0$, and nonsingular in its domain. What degenerates is its attempted extension to $A = 0$: the differential drops rank there, and the entire circle of phases collapses to a single point. In contrast, by equipping the single-pixel signal domain with a decoupled product metric, we define the local
  \textbf{cylindrical manifold}:
    \begin{equation}
        \mathcal{M}_{\text{local}} = [0, \infty) \times S^1, \quad g_{\text{local}} = dA^2 + d\theta^2.
    \end{equation}
Unlike the polar metric, $g_{\text{local}}$ has constant coefficients and remains non-degenerate throughout the boundary $A = 0$. Both metrics have Riemann curvature tensor equal to zero ($R \equiv 0$), which means that both are flat, so curvature is not what separates them. Instead, $g_{\text{local}}$ differs in that the circle $A = \mathrm{const}$ has length $2\pi$ at every amplitude rather than $2\pi A$ (Appendix~\ref{sec:app_metric_choice}). In addition, the non-trivial fundamental group ($\pi_1(\mathcal{M}_{\text{local}}) = \mathbb{Z}$) requires phase to be handled modulo $2\pi$. Because the metric is decoupled, the geodesics on $\mathcal{M}_{\text{local}}$ separate into independent paths: linear interpolation in amplitude on $[0, \infty)$ and minimal circular arcs on $S^1$. Furthermore, the cylinder is parallelisable: representing points as $(A, q)$ with $q \in S^1 \subset \mathbb{C}$, the vector fields $E_A = (1, 0)$ and $E_\theta = (0, i q)$ form a global orthonormal frame, preserving vector components under parallel transport without needing multiple coordinate charts. The angle $\theta$ remains a local coordinate with a periodic jump at $\pm\pi$, which Section~\ref{sec:method} handles by providing the network with a trigonometric phase embedding $(\cos\theta, \sin\theta)$.

\subsection{Destructive Midpoint Attenuation in Euclidean Interpolation}
\label{sec:midpoint_attenuation}
When complex signals are processed as two-channel real tensors $(\mathrm{Re}(z), \mathrm{Im}(z))$, straight Euclidean paths interpolate between states $z_0 = A_0 e^{i\theta_0}$ and $z_1 = A_1 e^{i\theta_1}$. By the triangle inequality, the trajectory amplitude satisfies $|z(t)| = |(1-t)z_0 + t z_1| \le (1-t) A_0 + t A_1$, with strict inequality for all $t \in (0, 1)$ whenever $A_0, A_1 > 0$ and $\theta_0 \not\equiv \theta_1 \pmod{2\pi}$. At the temporal midpoint $t = 0.5$, the amplitude is:
\begin{equation}
    |z(0.5)| = \frac{1}{2} \sqrt{A_0^2 + A_1^2 + 2 A_0 A_1 \cos(\Delta\theta)}, \quad \Delta\theta = \theta_1 - \theta_0.
\end{equation}
For equal endpoint amplitudes $A_0 = A_1 = A$, this simplifies to $|z(0.5)| = A |\cos(\Delta\theta / 2)|$.

Under an independent coupling with uniform phases, the wrapped phase difference $\Delta\theta$ is uniformly distributed on $(-\pi, \pi]$. Averaging over this distribution yields an expected midpoint amplitude of $2/\pi$ of the endpoint amplitude: an inherent attenuation of $1 - 2/\pi \approx 36.3\%$ that no neural network capacity can remove, as it is a geometric property of the path rather than of the model (Proposition~\ref{prop:attenuation_proof}).

For $A_0, A_1 > 0$, exactly antipodal endpoints are the only ones whose path meets the origin, occurring only at $t = A_0 / (A_0 + A_1)$ where the phase is undefined. In continuous phase distributions, this occurs with probability zero. A vanishing endpoint yields a radial chord whose phase never turns. The practical failure mode is the near-antipodal case at comparable amplitudes, where paths pass arbitrarily close to zero: the amplitude is heavily attenuated and the induced angular velocity becomes arbitrarily large. Although attenuation occurs for any $\Delta\theta \neq 0$, the expected drop $36.3\%$ applies to independent coupling; minibatch OT suppresses large $|\Delta\theta|$ and reduces this effect (Table~\ref{tab:bridge_ablation}). This highlights the core asymmetry between the two geometries: the cost of a Cartesian path depends on the distribution of $\Delta\theta$ and must be mitigated by choosing an informative coupling, whereas the cylindrical bound $|u_\theta| \le \pi$ holds unconditionally for every pair of endpoints under any coupling.
\section{Cylindrical Flow Matching}
\label{sec:method}

To decouple the amplitude from the phase dynamics, we formulate \textbf{Cylindrical Flow Matching (CyFM)} on the product manifold $\mathcal{M}^{H \times W} = ([0, \infty) \times S^1)^{H \times W}$. Because the product Riemannian metric $g = \sum_{u, v} (\mathrm{d}A_{u,v}^2 + \mathrm{d}\theta_{u,v}^2)$ is separable across pixels, the geodesic trajectories and the target velocities decouple across all pixels and are evaluated in parallel as elementwise operations. Spatial correlations across pixels are entirely learned by the neural backbone, while the geometry bounds the target angular velocity ($|u_\theta| \le \pi$) at every coordinate.

\subsection{Geodesic Probability Paths and Bounded Target Velocities}
Given a prior sample $x_0 = (A_0, \theta_0) \sim p_0$ from a tractable base distribution $p_0$ and a target field $x_1 = (A_1, \theta_1) \sim p_1$, the geodesic flow path on $\mathcal{M}$ interpolates amplitude along $[0, \infty)$ and phase along $S^1$:
\begin{equation}
    A_t = (1 - t) A_0 + t A_1, \quad \theta_t = (\theta_0 + t \, u_\theta) \pmod{2\pi},
\end{equation}
with target velocities $(u_A, u_\theta) \in T_{(A_t, \theta_t)}\mathcal{M}$ given by:
\begin{equation}
    u_A = A_1 - A_0, \quad u_\theta = \operatorname{atan2}\big(\sin(\theta_1 - \theta_0),\, \cos(\theta_1 - \theta_0)\big).
\end{equation}
The angular velocity $u_\theta$ is the Riemannian logarithmic map $\log_{\theta_0}^{S^1}(\theta_1)$, which yields the shortest circular arc displacement. This shortest path is unique, except at exact antipodal points ($|\Delta\theta| = \pi$), where the sign convention of $\operatorname{atan2}$ deterministically breaks the directional tie.

\begin{lemma}[Bounded Angular Velocity]
\label{lem:bounded_velocity}
For any pair $(x_0, x_1) \in \mathcal{M} \times \mathcal{M}$, the target angular velocity under CyFM is globally bounded: $|u_\theta| \le \pi$. In contrast, wherever $z_t \ne 0$, the induced angular velocity under Cartesian flow $z_t = (1-t)z_0 + t z_1$ is:
\begin{equation}
    \dot{\theta}_{\mathrm{Cart}} = \frac{x_t \dot{y}_t - y_t \dot{x}_t}{|z_t|^2} = \frac{A_0 A_1 \sin(\theta_1 - \theta_0)}{|z_t|^2}.
\end{equation}
Because the numerator remains constant along the chord, $\dot{\theta}_{\mathrm{Cart}}$ diverges as the path approaches the origin ($|z_t| \to 0$), reaching a peak of $2|\tan(\Delta\theta/2)|$ for equal endpoint amplitudes (Corollary~\ref{cor:peak_equal_amplitude}).
\end{lemma}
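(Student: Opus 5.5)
The lemma has three parts, and I will prove them in order.

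\textbf{The bound $|u_\theta|\le\pi$.} This follows from the range of $\operatorname{atan2}$. For any real $\phi$, the value $\operatorname{atan2}(\sin\phi,\cos\phi)$ is the unique representative of $\phi$ modulo $2\pi$ in $(-\pi,\pi]$. Take $\phi=\theta_1-\theta_0$. Then $u_\theta\in(-\pi,\pi]$, so $|u_\theta|\le\pi$. This holds for every pair of endpoints, independently of $A_0$, $A_1$, and of how $x_0,x_1$ were coupled. The target $u_\theta$ is constant in $t$, so the bound holds along the whole geodesic.

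\textbf{The Cartesian angular velocity formula.} On the set where $z_t\ne0$, write $z_t=x_t+iy_t$. Locally $\theta_t=\operatorname{atan2}(y_t,x_t)$ is smooth, and differentiating gives $\dot\theta=(x_t\dot y_t-y_t\dot x_t)/(x_t^2+y_t^2)$. Equivalently, $\dot\theta=\operatorname{Im}(\bar z_t\dot z_t)/|z_t|^2$. The path has $\dot z_t=z_1-z_0$. Expanding gives
\[
\bar z_t\dot z_t=\bigl((1-t)\bar z_0+t\bar z_1\bigr)(z_1-z_0).
\]
The terms $-(1-t)|z_0|^2$ and $t|z_1|^2$ are real. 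The cross terms combine to $(1-t)\bar z_0z_1-t\bar z_1z_0$, whose imaginary part is $(1-t)\operatorname{Im}(\bar z_0z_1)+t\operatorname{Im}(\bar z_0 z_1)=\operatorname{Im}(\bar z_0z_1)$. With $\bar z_0z_1=A_0A_1e^{i(\theta_1-\theta_0)}$, this gives the stated numerator $A_0A_1\sin(\theta_1-\theta_0)$, and it does not depend on $t$.

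\textbf{Divergence near the origin.} Because the numerator is fixed, $\dot\theta$ is large exactly when $|z_t|^2$ is small. The precise claim is about a family of endpoint pairs. Take $A_0=A_1=A$ and let $\Delta\theta\to\pi$. The numerator is $A^2\sin\Delta\theta$. By the midpoint formula of Section~\ref{sec:midpoint_attenuation}, $|z_{1/2}|^2=A^2\cos^2(\Delta\theta/2)$. At $t=1/2$ this gives
\[
|\dot\theta|=\frac{A^2\cdot 2|\sin(\Delta\theta/2)\cos(\Delta\theta/2)|}{A^2\cos^2(\Delta\theta/2)}=2|\tan(\Delta\theta/2)|,
\]
which tends to $\infty$.

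\textbf{Peak location (expected main obstacle).} The remaining step is to show that $t=1/2$ is where the peak occurs. The numerator is constant, so maximising $|\dot\theta|$ means minimising $|z_t|^2$. For equal amplitudes,
\[
|z_t|^2=A^2\bigl[(1-t)^2+t^2+2t(1-t)\cos\Delta\theta\bigr].
\]
This is a quadratic in $t$ that is symmetric under $t\mapsto 1-t$. When $\cos\Delta\theta<1$ it is strictly convex, so its minimum is at $t=1/2$. Combined with the computation above, this gives the peak $2|\tan(\Delta\theta/2)|$; this is the content of Corollary~\ref{cor:peak_equal_amplitude}. The only edge cases are the degenerate ones: $\Delta\theta=0$, where the numerator is zero and there is no rotation, and exact antipodes, where the path passes through the origin and $\dot\theta$ is undefined there.
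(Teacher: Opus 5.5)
Your proof is correct and follows essentially the same route as the paper: the $\pi$ bound comes from the range of $\operatorname{atan2}$, the constant numerator from expanding $\operatorname{Im}(\bar z_t \dot z_t)$ as in Proposition~\ref{thm:angular_divergence}, and the peak at $t=1/2$ from minimising $|z_t|^2 = A^2\bigl(1-2t(1-t)(1-\cos\Delta\theta)\bigr)$ as in Corollary~\ref{cor:peak_equal_amplitude}. You read the divergence claim as a statement about endpoint pairs approaching antipodality, which matches the paper's own caveat that a fixed bridge never gets closer to the origin than $|c|/|z_1-z_0|$.
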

This bound strictly constrains the regression target; the learned network output is not architecturally forced to respect it. Similarly, while the target probability paths remain in $[0, \infty)$, the integrated numerical trajectories could cross the boundary $A = 0$. We project onto $A \ge 0$ during sampling; this clamp was active in $< 0.003\%$ of solver updates across all runs (Appendix~\ref{sec:app_boundary}).

\subsection{Joint Optimal Transport Coupling on the Cylinder}
\label{sec:ot_coupling}
To minimize path crossings, CyFM couples training batches via exact minibatch Optimal Transport \citep{pooladian2023multisample, tong2023improving} computed \emph{jointly over whole fields} in the cylindrical metric. For prior fields $\{x_0^i\}_{i=1}^{B}$ and target fields $\{x_1^j\}_{j=1}^{B}$, the pairwise transport cost is:
\begin{equation}
    \label{eq:ot_cost}
    c_{ij} = \frac{1}{2HW} \sum_{u=1}^{H} \sum_{v=1}^{W} \Big[ \big(A^{i}_{0,uv} - A^{j}_{1,uv}\big)^2 + d_{S^1}\big(\theta^{i}_{0,uv}, \theta^{j}_{1,uv}\big)^2 \Big],
\end{equation}
where $d_{S^1}(\theta_0, \theta_1) = |\operatorname{atan2}(\sin(\theta_1 - \theta_0), \cos(\theta_1 - \theta_0))| \le \pi$ is the shortest circular geodesic distance. The optimal assignment $\sigma^* = \arg\min_{\sigma \in \mathcal{S}_B} \sum_{i=1}^B c_{i, \sigma(i)}$ is solved via linear sum assignment, and geodesic bridges are constructed between $x_0^i$ and $x_1^{\sigma^*(i)}$. Crucially, coupling is joint across amplitude, phase, and pixels: factorising transport per-coordinate or per-patch destroys the joint data distribution (Section~\ref{sec:exp_factorized}).

\subsection{Continuous Trigonometric Embedding and Objective}
To avoid periodic jump discontinuities of the scalar angles at $\pm\pi$, the inputs are continuously embedded per pixel via:
\begin{equation}
    \mathbf{e}(A, \theta) = \big(A, \cos\theta, \sin\theta\big) \in \mathbb{R}^3.
\end{equation}
The neural backbone takes $\mathbf{e}(A_t, \theta_t)$ with time embedding $t$ and predicts the tangent vector field $v_\phi(t, \mathbf{e}) = (v_A, v_\theta) \in T_{(A_t, \theta_t)}\mathcal{M}$. Under $g_{\mathrm{local}} = \mathrm{d}A^2 + \mathrm{d}\theta^2$, Riemannian Flow Matching reduces to the unweighted squared error:
\begin{equation}
    \label{eq:loss}
    \mathcal{L}_{\mathrm{CyFM}}(\phi) = \mathbb{E}_{t, x_0, x_1} \Big[ (v_A - u_A)^2 + (v_\theta - u_\theta)^2 \Big].
\end{equation}
Because the angular loss compares tangent vectors at the same point, circular wrapping is already handled by $u_\theta$. Adopting a $\mathcal{L}_1$ loss or weighting phase by target amplitude ($A_1/\bar{A}_1$) breaks the flow-matching fixed point (recovering conditional medians or biasing marginal vector fields); empirical comparisons are detailed in Appendix Table~\ref{tab:unet_loss_protocols}.

\clearpage
\section{Experiments}
\label{sec:experiments}

We organise the evaluation around five complementary investigations. Through analytical probability paths, we show that the cylindrical target eliminates origin singularities (Table~\ref{tab:bridge_ablation}). We then benchmark this geometry with U-Net architectures and Optimal Transport across synthetic, audio, and physical MRI wavefields (Figure~\ref{fig:convergence}, Table~\ref{tab:mri_64}). Across spatial dimensions, we tested the robustness of scaling and the persistence of joint OT benefits, before showing that factorizing the coupling destroys the joint target distribution (Table~\ref{tab:factorized}). Finally, we examine scaling limits on full-resolution clinical matrices.

\subsection{Analytical Bridge Ablation: Isolating the Geometry}
\label{sec:exp_ablation}

To isolate the effect of our geometric formulation without the confounding variables of neural network capacity or empirical coupling variance, we first evaluate exact analytical probability paths under each metric. By computing theoretical trajectories $z_t$ between samples drawn from a uniform phase prior and each complex target, we directly measure the angular velocities predicted in Section~\ref{sec:method}. Because the numerator of the Cartesian angular velocity $\dot{\theta}(t) = \operatorname{Im}(\bar{z}_0 z_1) / |z_t|^2$ is invariant along straight chords, the peak angular velocity is governed exclusively by the minimum distance to the origin.

\begin{table}[!h]
    \centering
    \caption{\textbf{Analytical path geometry across complex data sources.} Fraction of $10^6$ exact probability bridges (no network) with peak angular velocity $|\dot{\theta}| > \pi$. \textbf{Paths} treats all coordinates equally; \textbf{Energy} weights each path by target $|z_1|^2$. Targets: synthetic copula, fastMRI knee CORPD ($320\times320$), and LibriSpeech STFT ($64\times64$), each normalised by peak modulus.}
    \label{tab:bridge_ablation}
    \vspace{1mm}
    \begin{tabular}{llcc}
        \toprule
        \textbf{Target} & \textbf{Geometry / coupling} & \textbf{Paths $> \pi$ $\downarrow$} & \textbf{Energy $> \pi$ $\downarrow$} \\
        \midrule
        Synthetic
          & Cartesian, independent  & 46.0\% & 42.6\% \\
          & Cartesian, minibatch OT & 1.0\%  & 0.1\%  \\
          & \textbf{Cylindrical (ours)} & \textbf{0.0\%} & \textbf{0.0\%} \\
        \midrule
        Knee MRI
          & Cartesian, independent  & 56.4\% & 48.3\% \\
          & Cartesian, minibatch OT & 13.1\% & 0.1\%  \\
          & \textbf{Cylindrical (ours)} & \textbf{0.0\%} & \textbf{0.0\%} \\
        \midrule
        Speech STFT
          & Cartesian, independent  & 94.8\% & 49.4\% \\
          & Cartesian, minibatch OT & 81.9\% & 5.1\%  \\
          & \textbf{Cylindrical (ours)} & \textbf{0.0\%} & \textbf{0.0\%} \\
        \bottomrule
    \end{tabular}
    \vspace{1mm}
\end{table}

Table~\ref{tab:bridge_ablation} reports peak angular velocities along $10^6$ sampled endpoint pairs (seed $0$), evaluated in closed form (Proposition~\ref{thm:angular_divergence}). Across all three targets, Cartesian bridges exhibit a heavy-tailed Pareto power law (index $\approx 1.0$, Corollary~\ref{cor:pareto_tail}) with peak velocities diverging near the origin. Under independent coupling, nearly half to over $90\%$ of Cartesian paths exceed $\pi$ rad per unit time. Although minibatch OT partially suppresses these extreme turns for scalar pairs, this mitigation diminishes with dimension (Section~\ref{sec:exp_scaling}). In stark contrast, cylindrical bridges are bounded by $|u_\theta| \le \pi$ by construction across all domains and couplings. Weighting paths by target energy $|z_1|^2$ reinstates the planar metric factor $A^2$ from $\mathrm{d}A^2 + A^2\mathrm{d}\theta^2$, yet even under energy weighting, $42.6\text{--}49.4\%$ of Cartesian signal energy turns faster than $\pi$ under independent coupling, compared to $0.0\%$ on the cylinder. This bounded target provides an inherently more stable regression objective for neural networks.

\subsection{Spatial Field Synthesis: U-Net Evaluation}
\label{sec:exp_unet_baseline}

While analytical bridges characterise regression targets in isolation, practical generative modeling relies on a neural network regressing this vector field.

\textbf{Data and protocol.} Three domains normalised by peak modulus: synthetic copula fields, LibriSpeech speech STFT, and fastMRI knee wavefields, all at $64\times64$. Every arm trains an identical U-Net for 40 epochs from a uniform prior, evaluated across 5 seeds with Heun integration ($k$ steps cost $2k-1$ evaluations). In addition to pooled sliced $W_2$, we evaluate lag-one autocorrelation gaps on amplitude and circular phase to capture spatial structure (Section~\ref{sec:exp_factorized}). A difference is called \emph{separated} when every seed of one arm lies strictly below every seed of the other. Protocol details, cohort sizes, and $k^\star$ definitions are provided in Appendix~\ref{sec:app_protocol}.

Figure~\ref{fig:convergence} displays synthesis error across solver steps on synthetic fields and speech STFT ($64\times64$). Across both continuous domains, CyFM with joint OT achieves strictly lower error at every $k \le 8$, visible as non-overlapping seed envelopes against the best Cartesian arm. On synthetic fields, CyFM cuts single-step error threefold before converging to parity with Cartesian flows by $k=100$ ($p=0.15$). On speech STFT, CyFM reaches its error floor in just $4$ steps ($k^\star = 4$) compared to $100$ steps for Cartesian baselines, that being a $25\times$ acceleration to target fidelity.

\begin{figure}[!h]
    \centering
    \includegraphics[width=\linewidth]{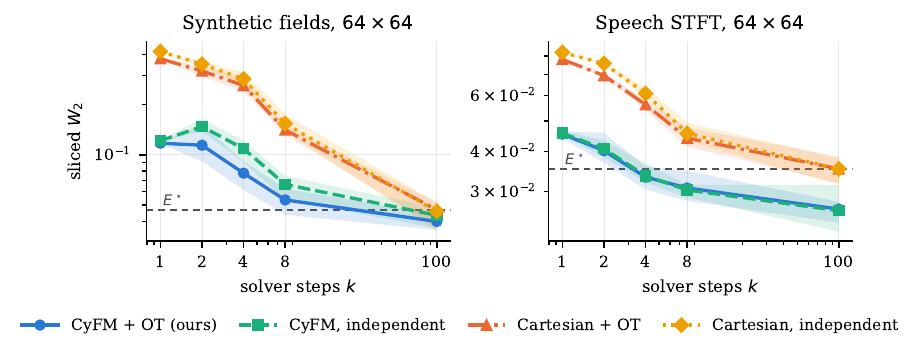}
    \caption{\textbf{Error against solver steps.} Sliced $W_2$, mean over 5 seeds, with shaded bands spanning minimum to maximum seed (non-touching bands indicate separated seeds). Same runs and domains as Appendix Table~\ref{tab:unet_baseline}, which carries exact values, separation marks, and $k^\star$.}
    \label{fig:convergence}
\end{figure}

\textbf{Physical wavefields: spatial structure beyond marginals on Knee MRI.} Sliced $W_2$ measures 2D marginal distributions; physical knee wavefields additionally carry anatomical correlations and smooth phase fields. Table~\ref{tab:mri_64} reports pooled marginal distance alongside measures of spatial arrangement.

\begin{table}[!h]
    \centering
    \caption{\textbf{Physical field synthesis on fastMRI Knee ($64\times64$).} Evaluation on held-out test slices across solver steps $k$ (mean over 5 seeds). Asterisks ($^\star$) mark strict seed separation across arms ($p=0.008$, two-sided Mann--Whitney). The score-based baseline is a variance-exploding SDE \citep{song2020score} matched in training cohort, evaluation budget, and seeds.}
    \label{tab:mri_64}
    \vspace{1mm}
    \resizebox{\linewidth}{!}{
    \begin{tabular}{llccccc}
        \toprule
        \textbf{Metric} & \textbf{Arm} & $k=1$ & $k=2$ & $k=4$ & $k=8$ & $k=100$ \\
        \midrule
        \multicolumn{7}{l}{\textit{Sliced $W_2$ ($\downarrow$) -- pixelwise marginal distribution}} \\
        & \textbf{Cylindrical + OT (ours)} & \textbf{0.0593}$^\star$ & 0.1200 & 0.0937 & 0.0735 & 0.0722 \\
        & Cylindrical, independent         & 0.0604 & 0.1338 & 0.1064 & 0.0756 & 0.0677 \\
        & Cartesian + OT                   & 0.1062 & \textbf{0.1120}$^\star$ & \textbf{0.0872} & 0.0614 & 0.0617 \\
        & Cartesian, independent           & 0.1518 & 0.1432 & 0.1054 & \textbf{0.0582}$^\star$ & \textbf{0.0563} \\
        & Complex diffusion (VE-SDE)       & 53.18 & 53.16 & 429.50 & 67.44 & 0.0771 \\
        \midrule
        \multicolumn{7}{l}{\textit{Spatial Lag-1 Gap ($\downarrow$) -- amplitude texture autocorrelation}} \\
        & \textbf{Cylindrical + OT (ours)} & 0.2971 & 0.1598 & \textbf{0.0339} & \textbf{0.0235}$^\star$ & \textbf{0.0153}$^\star$ \\
        & Cartesian, independent           & \textbf{0.0758}$^\star$ & \textbf{0.0495}$^\star$ & 0.0561 & 0.0404 & 0.0305 \\
        \midrule
        \multicolumn{7}{l}{\textit{Circular Phase Lag-1 Gap ($\downarrow$) -- phase spatial coherence}} \\
        & \textbf{Cylindrical + OT (ours)} & 0.4530 & 0.3867 & 0.2771 & \textbf{0.1517} & \textbf{0.1057}$^\star$ \\
        & Cartesian, independent           & \textbf{0.0922}$^\star$ & \textbf{0.1503}$^\star$ & \textbf{0.1483}$^\star$ & 0.1712 & 0.1712 \\
        \midrule
        \multicolumn{7}{l}{\textit{Radial Spectrum Gap ($\downarrow$) -- frequency power distribution}} \\
        & \textbf{Cylindrical + OT (ours)} & 0.4952 & 0.4195 & \textbf{0.1379} & \textbf{0.1233} & 0.1043 \\
        & Cartesian, independent           & \textbf{0.2936}$^\star$ & \textbf{0.1340}$^\star$ & 0.1725 & 0.1429 & \textbf{0.0976} \\
        \bottomrule
    \end{tabular}
    }
    \vspace{1mm}
\end{table}

In a single step ($k=1$), CyFM cuts the marginal error pooled by $1.8\times$ on the best Cartesian arm ($0.0593$ vs $0.1062$), with all five seeds strictly separated. Although Cartesian flows achieve a lower pooled marginal error for $k \ge 2$, the ranking reverses on spatial structure from $k=4$ onward (Table~\ref{tab:mri_64}). On amplitude texture autocorrelation ($\texttt{spatial\_lag1}$), CyFM outperforms Cartesian flows across all $k \ge 4$, achieving a $2.0\times$ reduction at $k=100$ with separated seeds from $k=8$ (at $k=4$, the cohort reproduces mean ordering, with full separation holding from $k=8$). On circular phase spatial coherence ($\texttt{phase\_lag1}$), CyFM attains a sharper phase $1.6\times$ at $k=100$, separated in all five seeds. The OT-coupled Cartesian baseline closely tracks independent Cartesian, showing that this structural reversal stems from the geometry rather than the coupling.

At this matrix resolution, central k-space truncation disperses tissue signal across the field of view, eliminating the exact zero-valued background pixels observed at $320\times320$. On these continuous wavefields, the unweighted metric $\mathrm{d}A^2 + \mathrm{d}\theta^2$ strictly outperforms an amplitude-weighted phase term on both spatial measures at $k=100$: $0.0153$ against $0.0219$ on amplitude texture and $0.1057$ against $0.1656$ on phase coherence.

\textbf{A score-based reference, and the transition at $k=2$.} Two parameterisations of flow matching do not constitute a full baseline family, so Table~\ref{tab:mri_64} also reports a variance-exploding SDE \citep{song2020score} matched in cohort, budget, and seeds. Across few steps ($k \le 8$), the reverse SDE fails to contract onto the data manifold, leaving errors orders of magnitude above data scale. Scored at matched budget it requires $100$ steps to reach $0.0771$, which every flow arm beats at $k=100$ and three beat at $k=8$; in its own $1000$-step regime it reaches $0.0881$. On a dense step grid, CyFM reaches matched target fidelity $E^\star = 0.0771$ at $k^\star = 1$, whereas Cartesian arms require $k^\star = 8$.

The cylindrical marginal error curve is non-monotone on this domain, rising from $k=1$ to $k=2$ before falling. Five candidate mechanisms are ruled out experimentally in Appendix~\ref{sec:app_k2} (undertraining, zero background, velocity bound violations, dimension, and bimodal amplitude laws). At $k=1$, a single Euler step transports amplitude almost completely while phase barely evolves; a comprehensive theoretical account remains an open question.

\subsection{Dimensionality Scaling and the Joint OT Coupling}
\label{sec:exp_scaling}

The transport-cost reduction that minibatch OT achieves over independent pairing collapses with dimension ($86\%$ on scalar pairs, $21\%$ on $8\times8$, and $3\%$ on $64\times64$ fields under the cylindrical metric). To test whether generative benefits collapse as well, we ablate U-Net models across dimensions ($16\times16$, $32\times32$, $64\times64$) with and without joint coupling (Appendix Table~\ref{tab:unet_scaling}).

Generative gains do not mirror this collapse. Across all resolutions, joint cylindrical OT improves few-step error ($3\text{--}60\%$ reduction at $k \le 4$, with seeds separated in 6 of 9 cases; exceptions are $k=1$ at $16\times16$ and $k=2, 4$ at $64\times64$). CyFM achieves lower error than the best Cartesian baseline at all $k \le 8$ across all three resolutions with separated seeds. At $k=100$, differences are not statistically significant ($p = 0.69$ at $64\times64$, best arms $0.035$ vs $0.032$; Appendix Table~\ref{tab:unet_loss_protocols}). OT also benefits Cartesian flows at low resolution, improving $16\times16$ $k=1$ error from $0.442$ to $0.227$.

\subsection{The Factorised Coupling Trap}
\label{sec:exp_factorized}

Because the cylindrical metric is separable across coordinates and pixels, it is tempting to decouple Optimal Transport per coordinate or spatial patch. The factorised minimum is always a lower bound on the joint one, attaining equality only when both distributions are product measures or coincide degenerately. Table~\ref{tab:factorized} demonstrates the cost of this shortcut on the spiral copula target ($n = 1024$, 8 seeds).

\begin{table}[!h]
    \centering
    \caption{\textbf{The Factorised Coupling Trap} (no network). Coupling amplitude and phase independently preserves both marginals exactly but erases their dependence, and reports a transport cost below the true optimum. Spiral copula target, $n = 1024$, mean over 8 seeds.}
    \label{tab:factorized}
    \vspace{1mm}
    \begin{tabular}{cccccc}
        \toprule
        $\rho$ & \textbf{Corr.\ target} & \textbf{Corr.\ coupled} & \textbf{KS (marginals)} & \textbf{Cost, factorised} & \textbf{Cost, joint OT} \\
        \midrule
        0.0 & 0.024 & 0.042 & 0 & 0.200 & 0.206 \\
        0.5 & 0.310 & 0.042 & 0 & 0.200 & 0.225 \\
        1.0 & 0.867 & 0.040 & 0 & 0.200 & 0.385 \\
        \bottomrule
    \end{tabular}
\end{table}

Coupling amplitude and phase independently preserves 1D marginals bit-identically (Kolmogorov--Smirnov distance 0), yet circular-linear correlation collapses to $\approx 0.04$ across all true target correlations $\rho$, and transport cost reports an artificial $0.20$ while true joint cost reaches $0.385$. The spatial analogue is identical: coupling $16\times16$ patches independently on $64\times64$ fields preserves intra-patch correlations ($0.979$) and pooled sliced $W_2$ ($0.000$), but collapses lag-1 autocorrelation across patch seams from $0.979$ to $0.044$. Factorised coupling transports to a reassembled product law rather than the true data distribution; neural network capacity cannot remedy an objective fitted to the wrong target.

\subsection{Scope and Scaling to Full-Resolution Clinical Matrices}
\label{sec:exp_scaling_320}

Scaling generative flow matching from $64\times64$ crops to full-resolution clinical MRI matrices ($320\times320$, $102{,}400$ complex dimensions) introduces distinct physical and geometric challenges. In Appendix~\ref{sec:app_mri_scaling}, we report the complete 5-seed benchmark comparing CyFM and Cartesian flows at $320\times320$ across sliced $W_2$, spatial lag-1, phase lag-1, and radial spectral metrics.

At a single step ($k=1$), CyFM preserves its few-step distributional advantage: it attains a sliced $W_2$ of $0.0561$ compared to $0.1203$ for Cartesian+OT ($2.1\times$ lower error, separated 5/5 seeds). However, at full resolution, Cartesian interpolation outperforms the cylindrical geometry on every spatial measure across all step counts. This cannot be attributed to the collapse of minibatch OT, as a cylindrical arm trained with independent coupling behaves identically ($0.3926$ vs $0.3882$ on the radial spectrum gap). As discussed in Appendix~\ref{sec:app_mri_scaling}, a candidate mechanism supported by empirical weighting ablations is the presence of non-measured background air ($16.7\%$ exactly zero coefficients), which the decoupled metric weights equally but the inherited Euclidean metric suppresses via $A^2$. Principled handling of unmeasured background in uncropped clinical acquisitions remains an important open challenge.

\section{Conclusion}
\label{sec:conclusion}

In this empirical study, we compared Euclidean and cylindrical geometries for complex-valued generative Flow Matching across synthetic fields, speech audio STFT spectrograms, and fastMRI knee wavefields. On exact probability paths, Cartesian interpolation produces a heavy-tailed angular velocity near the origin, while mapping complex fields onto the decoupled cylindrical manifold ($[0, \infty) \times S^1$) bounds the target angular velocity by $\pi$ by construction. Across all three domains at $64\times64$, Cylindrical Flow Matching (CyFM) with joint Optimal Transport achieves a lower error than the Cartesian baselines at a single step, a $1.8\times$--$3.0\times$ reduction without distillation. On synthetic fields and on speech that advantage extends to every $k \le 8$ with separated seeds; on knee MRI it is confined to $k=1$ on the pooled metric, and reappears from $k=4$ on measures of spatial structure, where CyFM reproduces the reference's amplitude texture twice as closely. Against a variance-exploding score-based baseline trained on the same cohort and scored at matched model calls, the single-step gap is three orders of magnitude, and that baseline needs a hundred steps to reach an error every flow arm beats there.

Our analysis also revealed critical structural insights when scaling flows to high dimensions. While the transport-cost reduction of minibatch OT shrinks on high-dimensional fields, joint cylindrical coupling preserves significant few-step error reductions across all resolutions. In contrast, factorising couplings across coordinates or patches quietly destroys the joint target distribution (the Factorised Coupling Trap). Finally, at the full clinical matrix ($320\times320$) the single-step advantage persists ($2.1\times$ on the pooled metric), but the Cartesian arm is closer to the reference on every spatial measure at every step count. The cause is open, and one candidate is already excluded: it is not the collapse of minibatch OT with dimension, because the cylindrical arm trained with no coupling at all behaves the same. We therefore report it as a scaling limit of a single global coordinate representation rather than of the coupling (Appendix~\ref{sec:app_mri_scaling}), and identify it as the open question this geometry now faces.

\bibliography{references}
\bibliographystyle{iclr2027_conference}

\newpage
\appendix
\section{The Cylinder is a Metric Substitution, not an Inherited Geometry}
\label{sec:app_metric_choice}

\begin{lemma}[Amplitude--phase chart and its pullback metric]
\label{lem:chart_pullback}
Let $\mathbb{C}^* = \mathbb{C} \setminus \{0\}$ and let
\begin{equation}
    \Phi : \mathbb{C}^* \to (0, \infty) \times S^1, \qquad
    \Phi(z) = \left( |z|, \; z / |z| \right),
\end{equation}
with inverse $\Phi^{-1}(A, \theta) = (A\cos\theta, A\sin\theta)$. Then $\Phi$ is a diffeomorphism, and the pullback of the Euclidean metric $g_{\mathrm{euc}} = dx^2 + dy^2$ under $\Phi^{-1}$ is
\begin{equation}
    (\Phi^{-1})^* g_{\mathrm{euc}} = dA^2 + A^2 d\theta^2 .
\end{equation}
\end{lemma}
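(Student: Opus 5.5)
The plan is to treat $S^1$ as the unit circle in $\mathbb{C}\cong\mathbb{R}^2$. I will show that $\Phi$ and the map $\Psi(A,q)=Aq$ are smooth mutual inverses, and then compute the pullback in the local angular coordinate $\theta$. Because $\theta$ is only a local chart on $S^1$, I will work on the two overlapping charts $\theta\in(-\pi,\pi)$ and $\theta\in(0,2\pi)$. On their overlap the coordinate $\theta$ changes by a constant, so $d\theta$ is globally well defined, and so is the final expression $dA^2+A^2d\theta^2$.

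\textbf{Step 1 (diffeomorphism).} $\Phi$ is smooth on $\mathbb{C}^*$, since $|z|=\sqrt{x^2+y^2}$ is smooth away from $0$ and $z/|z|$ is a quotient of smooth maps with nonvanishing denominator. Its image lies in $(0,\infty)\times S^1$. Define $\Psi:(0,\infty)\times S^1\to\mathbb{C}^*$ by $\Psi(A,q)=Aq$. This is the restriction of a smooth map $\mathbb{R}\times\mathbb{R}^2\to\mathbb{R}^2$ to an embedded submanifold, hence smooth, and $Aq\neq0$. The identities $\Psi(\Phi(z))=|z|\cdot z/|z|=z$ and $\Phi(Aq)=(A|q|,Aq/(A|q|))=(A,q)$, which use $A>0$ and $|q|=1$, show that $\Psi=\Phi^{-1}$. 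In the chart $q=e^{i\theta}$ this is exactly $(A\cos\theta,A\sin\theta)$, matching the statement.

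\textbf{Step 2 (pullback).} Substitute $x=A\cos\theta$ and $y=A\sin\theta$. Then $dx=\cos\theta\,dA-A\sin\theta\,d\theta$ and $dy=\sin\theta\,dA+A\cos\theta\,d\theta$. Expanding $dx^2+dy^2$ as symmetric products, the cross terms $\mp A\sin\theta\cos\theta\,(dA\,d\theta+d\theta\,dA)$ cancel. The diagonal terms combine via $\cos^2\theta+\sin^2\theta=1$ to give $dA^2+A^2d\theta^2$.

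This expression does not depend on which chart is used, which settles the well-definedness claim. Equivalently, in the frame of Section~\ref{sec:preliminaries}, $d\Psi(E_A)=q$ and $d\Psi(E_\theta)=Aiq$ are orthogonal in $\mathbb{R}^2$, with squared norms $1$ and $A^2$. This gives a chart-free version of the same computation.

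The main obstacle is keeping the argument honest about $S^1$ not being a single coordinate interval. I would handle this through the chart-overlap remark, or through the global frame $(E_A,E_\theta)$, rather than by treating $\theta\in[-\pi,\pi)$ as a global diffeomorphic coordinate.
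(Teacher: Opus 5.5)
Your proposal is correct and follows the same route as the paper. You establish that $\Phi$ and $\Phi^{-1}$ are smooth mutual inverses, then differentiate $x = A\cos\theta$, $y = A\sin\theta$, and the cross terms cancel using $\cos^2\theta + \sin^2\theta = 1$; your explicit inverse check via $\Psi(A,q) = Aq$ and your chart-overlap (or global frame $(E_A, E_\theta)$) treatment of $d\theta$ make rigorous what the paper asserts ``by construction'' and leaves implicit.
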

\begin{proof}
Both $\Phi$ and $\Phi^{-1}$ are smooth on their domains and are mutually inverse by construction, so $\Phi$ is a diffeomorphism. Differentiating $\Phi^{-1}$,
\begin{align}
    dx &= \cos\theta \, dA - A \sin\theta \, d\theta, \\
    dy &= \sin\theta \, dA + A \cos\theta \, d\theta .
\end{align}
Squaring and summing, the cross terms $\mp 2A\sin\theta\cos\theta \, dA \, d\theta$ cancel and the remaining terms group by $\cos^2\theta + \sin^2\theta = 1$:
\begin{equation}
    dx^2 + dy^2 = dA^2 + A^2 d\theta^2 . \qedhere
\end{equation}
\end{proof}

CyFM does not use this metric. It replaces it with the decoupled product metric $g_{\mathrm{cyl}} = dA^2 + d\theta^2$, that is, it deletes the factor $A^2$. The map $\Phi$ is therefore a diffeomorphism but \emph{not} an isometry between $(\mathbb{C}^*, g_{\mathrm{euc}})$ and $((0,\infty) \times S^1, g_{\mathrm{cyl}})$. This substitution, and not the identification of the domain, is the modelling decision this paper evaluates.

\begin{remark}[Flatness is not the distinguishing property]
\label{rem:both_flat}
Both metrics have identically vanishing Riemann curvature: $g_{\mathrm{euc}}$ because $\mathbb{C}^*$ is an open subset of the plane, and $g_{\mathrm{cyl}}$ because it is a product of one-dimensional factors. Each is therefore locally isometric to $\mathbb{R}^2$, and no local computation separates them. They are not globally isometric: under $g_{\mathrm{cyl}}$ the curves $A = \mathrm{const}$ are closed geodesics, whereas the geodesics of $g_{\mathrm{euc}}$ are straight line segments and none is closed. The non-trivial fundamental group $\pi_1 = \mathbb{Z}$ is a property of the punctured domain, shared by both, and is a cost the model pays through the $2\pi$ wrap rather than a benefit it gains.
\end{remark}

\begin{corollary}[Circumference]
\label{cor:circumference}
The circle $\{A = r\}$ has length $2\pi r$ under $g_{\mathrm{euc}}$ and length $2\pi$ under $g_{\mathrm{cyl}}$, for every $r > 0$.
\end{corollary}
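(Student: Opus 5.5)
The plan is to parametrise the circle $\{A = r\}$ by the curve $\gamma(\theta) = (r, \theta)$ for $\theta \in [-\pi, \pi)$. Because the principal angle is only a choice of representatives for $S^1 \cong \mathbb{R}/2\pi\mathbb{Z}$, this parametrisation traverses the circle exactly once. It is injective, and it is smooth with respect to the angular coordinate except at the single identification point. That point is a set of measure zero, so it does not affect the length integral. Its velocity has components $\dot A = 0$ and $\dot\theta = 1$.

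Next I evaluate each metric on $\dot\gamma$ and integrate. For $g_{\mathrm{euc}}$, Lemma~\ref{lem:chart_pullback} lets me work with its expression in the chart, $dA^2 + A^2 d\theta^2$. This gives $\|\dot\gamma\|^2 = 0 + r^2 \cdot 1$, so the speed is the constant $r$ (using $r > 0$). The length is therefore $\int_{-\pi}^{\pi} r\, d\theta = 2\pi r$. For $g_{\mathrm{cyl}} = dA^2 + d\theta^2$, the speed is $1$ and the length is $2\pi$, independent of $r$.

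As a cross-check for the Euclidean case, $\Phi^{-1}\circ\gamma$ is the standard circle $(r\cos\theta, r\sin\theta)$ in $\mathbb{R}^2$. Its speed is $\sqrt{r^2\sin^2\theta + r^2\cos^2\theta} = r$, which agrees with the value above.

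The only point needing care is that length does not depend on the parametrisation or on where the $2\pi$ wrap is placed. I would note that length is invariant under reparametrisation. I would also note that any other cut point on $S^1$ gives the same integral, because the integrand is constant.

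There is no real obstacle here; the result follows directly from Lemma~\ref{lem:chart_pullback}. I would close with the observation that as $r \to 0$ the Euclidean length tends to $0$, which is the degeneracy described in the main text, while the cylindrical length stays at $2\pi$.
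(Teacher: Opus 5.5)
Your proposal is correct and follows the paper's own route. The paper states this corollary without a separate proof, as a direct consequence of Lemma~\ref{lem:chart_pullback}: the circle $\{A=r\}$ has constant speed $r$ under $dA^2 + A^2\,d\theta^2$ and constant speed $1$ under $dA^2 + d\theta^2$, and you integrate these over one turn exactly as intended. Your worry about the cut point is unnecessary: $\theta \mapsto (r, e^{i\theta})$ on $[0, 2\pi]$ is already a smooth closed curve. The caveat does no harm, though.
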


This single difference accounts for both of the paper's velocity statements. Under $g_{\mathrm{euc}}$ the circumference available for a phase change shrinks to zero as $r \to 0$, so a path carrying a fixed phase difference through a neighbourhood of the origin must traverse it at ever greater angular speed; this is the $|c| / |z_t|^2$ growth of Proposition~\ref{thm:angular_divergence}. Under $g_{\mathrm{cyl}}$ the circumference is constant, so a half-turn costs the same at every amplitude and an angular displacement of $\pi$ always suffices; this is the bound of Lemma~\ref{lem:bounded_velocity}. The two results are the same fact about the factor $A^2$, read at the two ends of its range.

\begin{remark}[Boundary projection at $A = 0$]
\label{sec:app_boundary}
The product manifold $[0, \infty) \times S^1$ has a boundary at $A = 0$. Although target probability paths $A_t = (1-t)A_0 + t A_1$ satisfy $A_t \ge 0$ for all endpoints $A_0, A_1 \ge 0$, discrete numerical ODE integration steps may occasionally overshoot below zero due to discretization error. When this occurs, we project back onto the boundary via $A \leftarrow \max(A, 0)$. In our experiments across all step counts and datasets, this projection was triggered in fewer than $0.003\%$ of integration updates.
\end{remark}

\section{Path-Level Consequences of the Euclidean Metric}
\label{sec:app_proofs}

We present the formal proofs for the statements made in the main text about what the inherited Euclidean metric on $(\mathrm{Re}, \mathrm{Im}) \in \mathbb{R}^2$ implies for probability paths and for the regression target.

Let $z_1 = e^{i\theta_1}$ and $z_2 = e^{i\theta_2}$ be two unit-amplitude phasors on the circle $S^1$, and let $\Delta\theta = \theta_1 - \theta_2 \in [-\pi, \pi]$ denote their phase difference.

\begin{lemma}[Chordal vs. Geodesic Distance]
\label{lem:chord_vs_arc}
The Euclidean (chordal) distance $d_{\mathrm{chord}}$ in the complex plane and the geodesic (arc) distance $d_{S^1}$ on the circle are related by:
\begin{align}
    d_{\mathrm{chord}}(\theta_1, \theta_2) &= \|e^{i\theta_1} - e^{i\theta_2}\|_2 = 2 \left|\sin\left(\frac{\Delta\theta}{2}\right)\right|, \\
    d_{S^1}(\theta_1, \theta_2) &= |\Delta\theta|.
\end{align}
\end{lemma}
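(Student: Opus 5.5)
The plan is to compute the chordal distance directly from the complex-exponential representation and then read off the arc distance from the definition of the geodesic metric on $S^1$.

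\textbf{Chordal distance.} I expand the squared modulus
\begin{equation}
    |e^{i\theta_1} - e^{i\theta_2}|^2 = (e^{i\theta_1} - e^{i\theta_2})(e^{-i\theta_1} - e^{-i\theta_2}) = 2 - 2\cos(\Delta\theta),
\end{equation}
using $e^{i\alpha}+e^{-i\alpha} = 2\cos\alpha$. The half-angle identity $1-\cos\Delta\theta = 2\sin^2(\Delta\theta/2)$ then gives $4\sin^2(\Delta\theta/2)$. Taking the nonnegative square root yields $2|\sin(\Delta\theta/2)|$. The absolute value is needed because $\Delta\theta$ may be negative. An alternative route gives the same result: factor out $e^{i(\theta_1+\theta_2)/2}$, which has unit modulus, to leave $e^{i\Delta\theta/2}-e^{-i\Delta\theta/2} = 2i\sin(\Delta\theta/2)$.

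\textbf{Geodesic distance.} Under the metric $d\theta^2$ on $S^1 \cong \mathbb{R}/2\pi\mathbb{Z}$, a curve between the two points lifts to a path in $\mathbb{R}$ from $\theta_2$ to $\theta_1 + 2\pi m$ for some integer $m$. Its length is at least $|\theta_1 - \theta_2 + 2\pi m|$. Minimising over $m$ gives the wrapped difference, which is exactly the $\operatorname{atan2}$ value used for $d_{S^1}$ in the OT cost of Section~\ref{sec:ot_coupling}.

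\textbf{Main obstacle.} The only delicate point is convention. The statement fixes $\Delta\theta \in [-\pi,\pi]$, so I will take $\Delta\theta$ to be the wrapped representative; then $m=0$ is optimal and $d_{S^1} = |\Delta\theta|$. I will also note that the chordal formula is insensitive to this choice, since $|\sin(\cdot/2)|$ is $2\pi$-periodic in $\Delta\theta$. The two expressions therefore agree regardless of which representative is used.
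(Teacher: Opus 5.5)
Your proof is correct and follows essentially the paper's route: you reduce the squared distance to $2-2\cos\Delta\theta$ (via the product with the conjugate, where the paper uses real and imaginary parts), apply the half-angle identity, and take the nonnegative root. Your lifting argument for $d_{S^1}$ supplies the justification the paper dismisses as trivial; note only that your closing sentence overstates the point, since the chordal formula alone is representative-independent, while $d_{S^1}=|\Delta\theta|$ relies on the convention $\Delta\theta\in[-\pi,\pi]$ that the paper fixes before the lemma.
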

\begin{proof}
Expanding the squared Euclidean distance:
\begin{align}
    \|e^{i\theta_1} - e^{i\theta_2}\|_2^2 &= (\cos\theta_1 - \cos\theta_2)^2 + (\sin\theta_1 - \sin\theta_2)^2 \\
    &= (\cos^2\theta_1 + \sin^2\theta_1) + (\cos^2\theta_2 + \sin^2\theta_2) - 2(\cos\theta_1\cos\theta_2 + \sin\theta_1\sin\theta_2) \\
    &= 2 - 2\cos(\theta_1 - \theta_2) = 2(1 - \cos\Delta\theta).
\end{align}
Using the half-angle trigonometric identity $1 - \cos\alpha = 2\sin^2(\alpha/2)$:
\begin{equation}
    \|e^{i\theta_1} - e^{i\theta_2}\|_2^2 = 4\sin^2\left(\frac{\Delta\theta}{2}\right).
\end{equation}
Taking the square root yields $d_{\mathrm{chord}} = 2 \left|\sin\left(\frac{\Delta\theta}{2}\right)\right|$. The geodesic distance on the unit circle is trivially the absolute arc length $|\Delta\theta|$.
\end{proof}

\begin{proposition}[Expected Midpoint Attenuation]
\label{prop:attenuation_proof}
Consider a linear (Euclidean) interpolation between two unit phasors: $z_t^{\mathrm{euc}} = (1-t)e^{i\theta_1} + t e^{i\theta_2}$ for $t \in [0, 1]$. Under a uniform phase difference prior $\Delta\theta \sim \mathcal{U}((-\pi, \pi])$, the expected amplitude at the midpoint $t=0.5$ is exactly $2/\pi$, representing an expected signal attenuation of:
\begin{equation}
    1 - \frac{2}{\pi} \approx 36.338\%.
\end{equation}
\end{proposition}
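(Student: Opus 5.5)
The plan is to compute the midpoint amplitude in closed form and then average it over the uniform phase-difference prior. For two unit phasors, Lemma~\ref{lem:chord_vs_arc} is the natural tool: the same expansion gives the modulus of a \emph{sum} of phasors. Writing $z_{1/2}^{\mathrm{euc}} = \tfrac12(e^{i\theta_1} + e^{i\theta_2})$, we expand $|e^{i\theta_1} + e^{i\theta_2}|^2 = 2 + 2\cos\Delta\theta$. The identity $1+\cos\alpha = 2\cos^2(\alpha/2)$ then gives
\begin{equation}
    |z_{1/2}^{\mathrm{euc}}| = \left|\cos\left(\frac{\Delta\theta}{2}\right)\right|.
\end{equation}
This agrees with the equal-amplitude specialisation of the midpoint formula in Section~\ref{sec:midpoint_attenuation} with $A=1$. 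A cleaner route factors out $e^{i(\theta_1+\theta_2)/2}$, so the sum becomes $2\cos(\Delta\theta/2)$ times a unit phasor. The result depends only on $\Delta\theta$, which justifies placing the prior on the phase difference alone.

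Next we take the expectation under $\Delta\theta \sim \mathcal{U}((-\pi,\pi])$, which has density $1/(2\pi)$. On $(-\pi,\pi]$ the half-angle satisfies $\Delta\theta/2 \in (-\pi/2, \pi/2]$, where cosine is nonnegative, so the absolute value can be dropped. The integrand is even, so
\begin{equation}
    \mathbb{E}\,|z_{1/2}^{\mathrm{euc}}| = \frac{1}{2\pi}\int_{-\pi}^{\pi} \cos\left(\frac{\Delta\theta}{2}\right) d\Delta\theta = \frac{1}{\pi}\Big[2\sin\left(\frac{\Delta\theta}{2}\right)\Big]_{0}^{\pi} = \frac{2}{\pi}.
\end{equation}
The endpoint amplitude is $1$, so the expected attenuation is $1 - 2/\pi \approx 0.36338$, which is the stated value.

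The only step needing care is the reduction to $\Delta\theta$ and its range. In the main text, $\Delta\theta$ is the \emph{wrapped} phase difference of independent uniform phases. We would remark that if $\theta_1,\theta_2$ are i.i.d.\ uniform on $S^1$, then $\theta_1-\theta_2 \bmod 2\pi$ is again uniform. Because $|\cos(\Delta\theta/2)|$ is $2\pi$-periodic, wrapping does not change the integrand, and the hypothesis of the proposition is exactly the one needed. Beyond this the argument is routine; we do not expect a genuine obstacle.
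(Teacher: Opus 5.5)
Your proposal is correct and follows essentially the same route as the paper: reduce the midpoint modulus to $|\cos(\Delta\theta/2)|$ (the paper factors out $e^{i(\theta_1+\theta_2)/2}$ directly, the alternative you mention), then drop the absolute value on the half-angle range and integrate against the density $1/(2\pi)$ to get $2/\pi$. Your remark that wrapping preserves uniformity and leaves the $2\pi$-periodic integrand unchanged goes slightly beyond the paper, which places the prior on $\Delta\theta$ directly, but it is harmless.
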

\begin{proof}
At $t=0.5$, the interpolated signal is:
\begin{equation}
    z_{0.5}^{\mathrm{euc}} = \frac{1}{2} (e^{i\theta_1} + e^{i\theta_2}) = e^{i\frac{\theta_1+\theta_2}{2}} \cos\left(\frac{\Delta\theta}{2}\right).
\end{equation}
The amplitude is $|z_{0.5}^{\mathrm{euc}}| = \left|\cos\left(\frac{\Delta\theta}{2}\right)\right|$. The expected value under a uniform distribution $p(\Delta\theta) = \frac{1}{2\pi}$ is:
\begin{equation}
    \mathbb{E}\left[|z_{0.5}^{\mathrm{euc}}|\right] = \frac{1}{2\pi} \int_{-\pi}^\pi \left|\cos\left(\frac{\Delta\theta}{2}\right)\right| d\Delta\theta.
\end{equation}
Since $\Delta\theta \in [-\pi, \pi]$, the half-angle $\Delta\theta/2 \in [-\pi/2, \pi/2]$. In this domain, the cosine function is non-negative, so the absolute value can be dropped:
\begin{align}
    \mathbb{E}\left[|z_{0.5}^{\mathrm{euc}}|\right] &= \frac{1}{2\pi} \int_{-\pi}^\pi \cos\left(\frac{\Delta\theta}{2}\right) d\Delta\theta \\
    &= \frac{1}{2\pi} \left[ 2\sin\left(\frac{\Delta\theta}{2}\right) \right]_{-\pi}^\pi \\
    &= \frac{1}{2\pi} \left( 2(1) - 2(-1) \right) = \frac{4}{2\pi} = \frac{2}{\pi} \approx 0.6366.
\end{align}
The expected loss in amplitude is therefore $1 - 2/\pi \approx 36.338\%$. By contrast, on the cylindrical manifold, $A_t = (1-t)A_0 + tA_1$, which trivially evaluates to $1.0$ at $t=0.5$, yielding exactly $0\%$ attenuation.
\end{proof}

\begin{proposition}[Vanishing Gradient of the Chordal Penalty]
\label{prop:vanishing_gradient}
Let the chordal Euclidean loss be $\mathcal{L}_{\mathrm{chord}}(\theta) = \frac{1}{2}\|e^{i\theta} - e^{i\theta^*}\|_2^2 = 1 - \cos(\theta - \theta^*)$ and the geodesic loss be $\mathcal{L}_{\mathrm{geod}}(\theta) = \frac{1}{2}(\theta - \theta^*)^2$. As the phase error reaches its maximum $\Delta\theta = |\theta - \theta^*| \to \pi$, the gradient of the chordal penalty vanishes to zero, while the geodesic gradient reaches its supremum.
\end{proposition}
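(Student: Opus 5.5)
I will differentiate both losses directly with respect to $\theta$ and compare the gradient magnitudes as functions of the phase error. Set $\delta = \theta - \theta^*$. For the geodesic loss the phase error is understood as the wrapped representative in $(-\pi,\pi]$, consistent with $d_{S^1}$ in Lemma~\ref{lem:chord_vs_arc}.

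First, the identity $\frac{1}{2}\|e^{i\theta} - e^{i\theta^*}\|_2^2 = 1 - \cos\delta$ follows immediately from Lemma~\ref{lem:chord_vs_arc}, since the squared chord is $2(1-\cos\delta)$. Differentiating gives
\begin{equation}
\frac{\partial \mathcal{L}_{\mathrm{chord}}}{\partial \theta} = \sin\delta .
\end{equation}
As $|\delta| \to \pi$ we have $\sin\delta \to 0$, so the chordal gradient vanishes. The magnitude $|\sin\delta|$ is maximised at $|\delta| = \pi/2$ and is symmetric about that point. Hence the restoring force weakens monotonically on $[\pi/2, \pi]$, precisely where the error is largest.

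Second, for the geodesic loss,
\begin{equation}
\frac{\partial \mathcal{L}_{\mathrm{geod}}}{\partial \theta} = \delta ,
\end{equation}
whose magnitude $|\delta|$ is strictly increasing on $[0,\pi]$. On the admissible range of wrapped errors its supremum is therefore $\pi$, attained at $|\delta| = \pi$. As a side remark, the ratio $|\sin\delta|/|\delta|$ decreases from $1$ to $0$, which quantifies the comparison.

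The only delicate point is the endpoint $|\delta| = \pi$ itself. There the wrapped geodesic loss has a kink, because the two arcs tie, just as in the $\operatorname{atan2}$ tie-break discussed after the geodesic path definition. I would therefore phrase both claims as limits $|\delta| \to \pi^-$, and note that the one-sided derivatives have magnitude $\pi$, with the sign fixed by the $\operatorname{atan2}$ convention. Beyond this endpoint subtlety, the argument is a routine computation.
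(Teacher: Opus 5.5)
Your proposal is correct and follows the paper's proof: both differentiate the two losses directly, showing that $\sin(\theta-\theta^*)\to 0$ while $\theta-\theta^*$ grows linearly to magnitude $\pi$ as $|\Delta\theta|\to\pi$. Restricting to wrapped errors $|\delta|\le\pi$, which the paper leaves implicit but which the word ``supremum'' needs, and treating the kink at $|\delta|=\pi$ as a one-sided limit are sound refinements of the same argument.
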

\begin{proof}
Taking the derivative of the chordal loss with respect to the phase $\theta$:
\begin{equation}
    \frac{\partial \mathcal{L}_{\mathrm{chord}}}{\partial \theta} = \frac{d}{d\theta} (1 - \cos(\theta - \theta^*)) = \sin(\theta - \theta^*).
\end{equation}
Taking the limit as $\Delta\theta \to \pi$, we find $\lim_{\Delta\theta \to \pi} |\sin(\Delta\theta)| = \sin(\pi) = 0$. Thus, at the point of maximum possible error (antipodal phase), the network receives exactly zero corrective gradient signal.
Conversely, the geodesic loss derivative is:
\begin{equation}
    \frac{\partial \mathcal{L}_{\mathrm{geod}}}{\partial \theta} = \theta - \theta^*,
\end{equation}
which linearly increases with the error, providing a maximal restoring force of $\pi$ at $\Delta\theta \to \pi$.
\end{proof}

\begin{proposition}[Angular velocity of the Cartesian bridge]
\label{thm:angular_divergence}
Let $z_0, z_1 \in \mathbb{C}$ with $z_0 \ne z_1$, and let $z_t = (1-t) z_0 + t z_1$. Write $c = \operatorname{Im}(\bar{z}_0 z_1)$. At every $t$ with $z_t \ne 0$ the induced angular velocity is
\begin{equation}
    \dot{\theta}_t = \frac{\operatorname{Im}(\bar{z}_t \dot{z}_t)}{|z_t|^2} = \frac{c}{|z_t|^2},
\end{equation}
so the numerator does not depend on $t$, and
\begin{equation}
    \label{eq:peak_angular}
    \sup_{t \in [0,1]} |\dot{\theta}_t| = \frac{|c|}{d^2}, \qquad d = \min_{t \in [0,1]} |z_t| ,
\end{equation}
with the supremum attained at the closest approach to the origin.
\end{proposition}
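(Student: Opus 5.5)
The plan is to compute the angular velocity directly from the polar representation of $z_t$, show that its numerator is constant along the chord, and then read off the supremum from the minimum of $|z_t|$.

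\textbf{Step 1: the angular-velocity formula.} On any interval where $z_t \ne 0$, write $z_t = |z_t| e^{i\theta_t}$ with a continuous local branch of $\theta_t$; this is possible because $\mathbb{C}^*$ has the polar chart of Lemma~\ref{lem:chart_pullback}. Differentiating gives
\[
\dot z_t = \bigl(\tfrac{d}{dt}|z_t| + i|z_t|\dot\theta_t\bigr)e^{i\theta_t}.
\]
Multiplying by $\bar z_t = |z_t|e^{-i\theta_t}$ yields $\bar z_t \dot z_t = |z_t|\tfrac{d}{dt}|z_t| + i|z_t|^2\dot\theta_t$. Taking imaginary parts gives $\dot\theta_t = \operatorname{Im}(\bar z_t \dot z_t)/|z_t|^2$. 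This agrees with the real form $x_t\dot y_t - y_t\dot x_t$ used in Lemma~\ref{lem:bounded_velocity}.

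\textbf{Step 2: the numerator is constant.} Substitute $\dot z_t = z_1 - z_0$ and $\bar z_t = (1-t)\bar z_0 + t\bar z_1$, then expand:
\[
\bar z_t \dot z_t = (1-t)\bar z_0 z_1 - (1-t)|z_0|^2 + t|z_1|^2 - t\bar z_1 z_0 .
\]
The terms $|z_0|^2$ and $|z_1|^2$ are real. Moreover $\operatorname{Im}(\bar z_1 z_0) = -\operatorname{Im}(\bar z_0 z_1)$, since these two products are complex conjugates. The imaginary part is therefore $(1-t)c + tc = c$, independent of $t$.

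\textbf{Step 3: the supremum.} The map $t \mapsto |z_t|^2$ is a convex quadratic, and it is nondegenerate because $z_0 \ne z_1$. So $d = \min_{[0,1]} |z_t|$ is attained at some $t^*$, namely the projection of $0$ onto the segment, clipped to $[0,1]$. There are two cases.
\begin{itemize}
\item If $d > 0$, then $|\dot\theta_t| = |c|/|z_t|^2$ is continuous on $[0,1]$. It is maximized exactly where $|z_t|$ is smallest, which gives $\sup_t |\dot\theta_t| = |c|/d^2$, attained at $t^*$.
\item If $d = 0$, then $0$ lies on the line through $z_0$ and $z_1$. In that case $z_0$ and $z_1$ are real-collinear with $0$, so $c = 0$. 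For $t \ne t^*$ we get $\dot\theta_t = 0$, and the formula $|c|/d^2$ must be read via the convention that the supremum is taken over $z_t \ne 0$.
\end{itemize}

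\textbf{Main obstacle.} The only delicate point is this degenerate case, which has the form $0/0$. I would handle it with a short remark: either restrict the statement to $d > 0$, or adopt the convention that the expression is interpreted as the supremum over $\{t : z_t \ne 0\}$, which then equals $0$. Everything else is routine.
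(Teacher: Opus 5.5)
Your proof is correct and follows the paper's argument. Both derive $\dot\theta_t = \operatorname{Im}(\bar z_t \dot z_t)/|z_t|^2$ (you differentiate the polar form, the paper applies the chain rule to $\operatorname{atan2}$), both expand to show the numerator equals the constant $c$, and both conclude that $|\dot\theta_t|$ peaks where $|z_t|$ is smallest. Your separate treatment of $d = 0$ is a valid refinement that the paper's proof leaves implicit: there the chord is radial, so $c = 0$ and $|c|/d^2$ has the form $0/0$.
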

\begin{proof}
With $z = x + iy$ and $A = |z|$, the map $\theta = \operatorname{atan2}(y, x)$ is differentiable at every point with $A > 0$, and the chain rule gives $\dot{\theta} = (x\dot{y} - y\dot{x}) / A^2 = \operatorname{Im}(\bar{z}\dot{z}) / |z|^2$. Along the bridge $\dot{z}_t = z_1 - z_0$ is constant, and
\begin{align}
    \operatorname{Im}(\bar{z}_t \dot{z}_t)
    &= \operatorname{Im}\big( ((1-t)\bar{z}_0 + t \bar{z}_1)(z_1 - z_0) \big) \\
    &= (1-t) \operatorname{Im}(\bar{z}_0 z_1) - t \operatorname{Im}(\bar{z}_1 z_0) = \operatorname{Im}(\bar{z}_0 z_1) = c,
\end{align}
because $\operatorname{Im}(\bar{z}_0 z_0) = \operatorname{Im}(\bar{z}_1 z_1) = 0$ and $\operatorname{Im}(\bar{z}_1 z_0) = -c$. The numerator being constant, $|\dot{\theta}_t|$ is largest exactly where $|z_t|$ is smallest.
\end{proof}

Two consequences are worth separating from the usual informal statement. First, the bridge does \emph{not} approach the origin arbitrarily closely for fixed endpoints: combining $|\dot{\theta}| = |c| / A^2$ with the Cauchy--Schwarz bound $|\dot{\theta}| \le |z_1 - z_0| / A$ gives $d \ge |c| / |z_1 - z_0|$, which is the distance from the origin to the line through $z_0$ and $z_1$. Second, an upper bound of order $A^{-1}$ on $|\dot{\theta}|$ does not by itself establish divergence, since a purely radial flow has $c = 0$ and $\dot{\theta} \equiv 0$ while satisfying the same bound. What diverges, and the regime the experiments probe, is the peak over a bridge whose endpoints approach antipodal.

\begin{corollary}[Peak angular velocity at equal amplitudes]
\label{cor:peak_equal_amplitude}
If $|z_0| = |z_1| = A > 0$ and $\Delta\theta = \operatorname{wrap}(\theta_1 - \theta_0)$, the closest approach occurs at $t = 1/2$ and
\begin{equation}
    \sup_{t \in [0,1]} |\dot{\theta}_t| = 2 \left| \tan\left( \frac{\Delta\theta}{2} \right) \right| ,
\end{equation}
which is unbounded as $\Delta\theta \to \pm\pi$ and equals the cylindrical rate $|u_\theta| = |\Delta\theta|$ only at $\Delta\theta = 0$.
\end{corollary}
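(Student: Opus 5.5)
The plan is to apply Proposition~\ref{thm:angular_divergence} directly, so only two quantities need computing for equal-amplitude endpoints: the constant numerator $c = \operatorname{Im}(\bar z_0 z_1)$ and the closest-approach distance $d = \min_{t\in[0,1]}|z_t|$. The numerator is immediate. Writing $z_0 = Ae^{i\theta_0}$ and $z_1 = Ae^{i\theta_1}$ gives $c = A^2\sin(\theta_1-\theta_0) = A^2\sin\Delta\theta$. Because $\sin$ is $2\pi$-periodic, wrapping the phase difference does not change this.

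Next I locate the minimum of $|z_t|^2$. I expand
\begin{equation}
|z_t|^2 = A^2\big[(1-t)^2 + t^2 + 2t(1-t)\cos\Delta\theta\big] = A^2\big[1 - 2t(1-t)(1-\cos\Delta\theta)\big].
\end{equation}
Since $1-\cos\Delta\theta \ge 0$ and $t(1-t)$ is maximised on $[0,1]$ at $t=1/2$, the minimum is at $t=1/2$. There $d^2 = A^2(1+\cos\Delta\theta)/2 = A^2\cos^2(\Delta\theta/2)$, which agrees with the midpoint formula of Section~\ref{sec:midpoint_attenuation}. When $\Delta\theta = 0$ the function is constant, so any $t$ attains the minimum and the claim still holds, with peak $0$.

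Substituting into equation~\ref{eq:peak_angular} gives
\begin{equation}
\sup_t|\dot\theta_t| = \frac{A^2|\sin\Delta\theta|}{A^2\cos^2(\Delta\theta/2)} = \frac{2|\sin(\Delta\theta/2)\cos(\Delta\theta/2)|}{\cos^2(\Delta\theta/2)} = 2\left|\tan\left(\frac{\Delta\theta}{2}\right)\right|,
\end{equation}
valid for $|\Delta\theta|<\pi$. Divergence as $\Delta\theta\to\pm\pi$ follows because $\tan(\Delta\theta/2)\to\pm\infty$ there. The antipodal case $|\Delta\theta|=\pi$ needs separate treatment. Here $z_{1/2}=0$ and Proposition~\ref{thm:angular_divergence} does not apply, because $\dot\theta$ is undefined at the origin. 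I will state that the formula is read for $|\Delta\theta|<\pi$, with the value $+\infty$ at $\pi$ interpreted as the limit. This is consistent with the paper's remark that antipodal paths hit the origin.

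The main step needing care is the final claim that $2|\tan(\Delta\theta/2)| = |\Delta\theta|$ only at $\Delta\theta=0$. Set $x = |\Delta\theta|/2 \in [0,\pi/2)$. The claim becomes $\tan x = x$ only at $x=0$. This follows from $f(x) = \tan x - x$ having $f(0)=0$ and $f'(x) = \tan^2 x > 0$ for $x\in(0,\pi/2)$, so $f$ is strictly increasing and hence $\tan x > x$ for $x>0$. Combined with $u_\theta = \operatorname{wrap}(\Delta\theta)$ from Lemma~\ref{lem:bounded_velocity}, this shows the Cartesian peak strictly exceeds the cylindrical rate for every nonzero phase difference.
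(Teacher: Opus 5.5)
Your proof is correct and follows the paper's argument: expand $|z_t|^2 = A^2\bigl(1 - 2t(1-t)(1-\cos\Delta\theta)\bigr)$, minimise at $t = 1/2$ to get $d = A|\cos(\Delta\theta/2)|$, and substitute $c = A^2\sin\Delta\theta$ into the peak formula of Proposition~\ref{thm:angular_divergence}. Your $\tan x > x$ argument and your treatment of $\Delta\theta = 0$ and $|\Delta\theta| = \pi$ fill in details the paper leaves implicit; at exactly $|\Delta\theta| = \pi$ one even has $c = 0$, so the phase does not turn at all away from $t = 1/2$, and the value $+\infty$ there is only a limiting convention, as you say.
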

\begin{proof}
$|z_t|^2 = A^2\big(1 - 2t(1-t)(1 - \cos\Delta\theta)\big)$ is minimised at $t = 1/2$, where $|z_{1/2}| = A|\cos(\Delta\theta/2)|$. With $c = A^2 \sin\Delta\theta = 2A^2 \sin(\Delta\theta/2)\cos(\Delta\theta/2)$, equation~\eqref{eq:peak_angular} gives $2|\sin(\Delta\theta/2)| / |\cos(\Delta\theta/2)|$.
\end{proof}

\begin{corollary}[Heavy tail under a uniform phase difference]
\label{cor:pareto_tail}
If in addition $\Delta\theta \sim \mathcal{U}(-\pi, \pi]$, then for $x > 0$
\begin{equation}
    \mathbb{P}\Big( \sup_t |\dot{\theta}_t| > x \Big) = 1 - \frac{2}{\pi} \arctan\left(\frac{x}{2}\right) = \frac{4}{\pi x} + \mathcal{O}(x^{-3}),
\end{equation}
a Pareto tail of index $1$.
\end{corollary}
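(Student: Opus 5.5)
The plan is to reduce the tail probability to an event about $|\Delta\theta|$ alone, using Corollary~\ref{cor:peak_equal_amplitude}. That corollary gives the peak as the deterministic function $\sup_t|\dot\theta_t| = 2|\tan(\Delta\theta/2)|$. Set $s = |\Delta\theta|/2$. If $\Delta\theta \sim \mathcal{U}(-\pi,\pi]$, then $|\Delta\theta| \sim \mathcal{U}[0,\pi]$, so $s \sim \mathcal{U}[0,\pi/2]$. The single point $|\Delta\theta| = \pi$ has probability zero, so the peak is almost surely finite and we may work with $s \in [0,\pi/2)$.

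The map $s \mapsto 2\tan s$ is continuous and strictly increasing from $[0,\pi/2)$ onto $[0,\infty)$. For $x>0$ this gives the event identity $\{2\tan s > x\} = \{s > \arctan(x/2)\}$. Using the uniform law of $s$,
\begin{equation*}
\mathbb{P}\big(s > \arctan(x/2)\big) = \frac{\pi/2 - \arctan(x/2)}{\pi/2} = 1 - \frac{2}{\pi}\arctan\left(\frac{x}{2}\right),
\end{equation*}
which is the exact formula in the statement.

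For the asymptotics, use the identity $\pi/2 - \arctan(y) = \arctan(1/y)$ for $y>0$. This rewrites the tail as $\frac{2}{\pi}\arctan(2/x)$. The Taylor series $\arctan u = u - u^3/3 + \mathcal{O}(u^5)$ with $u = 2/x \to 0$ then yields
\begin{equation*}
\frac{2}{\pi}\left(\frac{2}{x} - \frac{8}{3x^3} + \cdots\right) = \frac{4}{\pi x} + \mathcal{O}(x^{-3}).
\end{equation*}
This is the claimed expansion. The alternating series bound makes the remainder estimate uniform for $x \ge 2$, say. For bounded $x$ the $\mathcal{O}$ statement holds trivially because both sides are bounded, so no separate argument is needed there.

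Finally, the tail decays like $C x^{-1}$ with $C = 4/\pi$, which is regular variation of index $1$, i.e. a Pareto tail of index $1$. None of the steps is a real obstacle. The only points needing care are the measure-zero antipodal case and stating the monotone inversion precisely so that the event identity is exact rather than only asymptotic.
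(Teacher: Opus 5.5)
Your proposal is correct and is essentially the paper's proof: you invert the monotone map $2\tan(|\Delta\theta|/2)$ to get the event $|\Delta\theta| > 2\arctan(x/2)$ with $|\Delta\theta|$ uniform on $[0,\pi]$, then expand $\arctan$ at infinity. Your identity $\pi/2 - \arctan y = \arctan(1/y)$ is just another way of writing the paper's expansion $\arctan(x/2) = \pi/2 - 2/x + \mathcal{O}(x^{-3})$, and the only blemish is your aside that ``both sides are bounded'' for bounded $x$, which fails as $x \to 0^+$ where $4/(\pi x)$ blows up but is unnecessary since the $\mathcal{O}$ refers to $x \to \infty$ (and $x^{-3}$ dominates near $0$ anyway).
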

\begin{proof}
$2|\tan(\Delta\theta/2)| > x$ iff $|\Delta\theta| > 2\arctan(x/2)$, and $|\Delta\theta|$ is uniform on $[0, \pi]$. Expanding $\arctan(x/2) = \pi/2 - 2/x + \mathcal{O}(x^{-3})$ gives the asymptotic form.
\end{proof}

Corollary~\ref{cor:pareto_tail} fixes the amplitudes. When they are random as well, the peak is set by a race between a numerator that vanishes linearly in the smaller amplitude and a squared distance that vanishes quadratically: writing $A_0 = 1$ and $A_1 = a$, both $c$ and $d$ are of order $a |\sin \Delta\theta|$, so $\sup_t |\dot{\theta}_t| = |c| / d^2$ is of order $1 / (a |\sin \Delta\theta|)$ and small amplitudes enlarge the peak rather than shrink it. The index that results depends on the law of the amplitudes near zero and is not fixed by this argument; an exactly vanishing endpoint contributes nothing at all, since $c = 0$ makes the chord radial and its argument never turns. The index near $1$ reported in Table~\ref{tab:bridge_ablation} is therefore an empirical finding rather than a consequence of this corollary. The cylindrical bridge admits no such tail: $\theta_t = \theta_0 + t u_\theta$ turns at the constant rate $u_\theta$, and $|u_\theta| \le \pi$ for every pair of endpoints.

\section{Experimental Protocol in Full}
\label{sec:app_protocol}

The main text states the protocol in the form a reader needs while looking at a table.
This is the rest of it.

\textbf{Cohorts.} The synthetic complex field is drawn from a Gaussian copula with
$\rho = 0.5$ and a correlation length of 4 pixels, $8{,}192$ fields. Speech is
LibriSpeech read speech as complex STFT segments, $64\times64$ bins by frames,
$110{,}099$ segments. The fastMRI cohort is knee coronal proton-density without fat
suppression, coil-combined with ESPIRiT; the main-text block reduces the acquisition
matrix to $64\times64$ by truncating k-space about DC, and
Appendix~\ref{sec:app_mri_scaling} reports the store's own $320\times320$.

\textbf{Training and sampling.} Convolutional U-Nets, 40 epochs, batch 64, five seeds.
Both geometries share the independent uniform prior: amplitude $\mathcal{U}[0,1]$ and
phase $\mathcal{U}[0, 2\pi)$ per pixel. Solvers use Heun integration, so $k$ steps cost
$2k-1$ vector evaluations and step counts are comparable across arms as budgets rather
than as iterations.

\textbf{$k^\star$.} Arms that converge to different asymptotes cannot be compared by
error alone, so $k^\star$ is the first step count at which an arm reaches $E^\star$, the
largest $k=100$ error among the arms of its block. It is the median over seeds, read off
a dense step grid. Fixing $E^\star$ to the block's worst asymptote is what stops the
statistic rewarding an arm for converging to a worse place.

\textbf{Structural measures.} Each reports the gap between a generated batch's lag-one
autocorrelation and the reference cohort's, so a field smoother than the data is
penalised as much as one rougher than it. Phase coherence is circular,
$\mathbb{E}[\cos(\theta_{i+1} - \theta_i)]$ over horizontally adjacent coefficients, and
a generated batch is scored on the share of pairs whose phase the reference actually
measured: $\operatorname{atan2}(0,0)$ returns a placeholder rather than an angle, so a
pair touching an exactly zero amplitude carries no phase information and scoring the
model on it would compare its output against a constant.

\textbf{What the tests support.} A difference is called separated when every seed of one
arm lies strictly below every seed of the other, which for five seeds against five is a
two-sided Mann--Whitney $p = 0.008$. With that many seeds the tests have little power, so
where we report no significant difference that is a failure to detect one and not
evidence of equivalence; an equivalence claim would need a prespecified margin and a test
against it. The headline claim is an intersection over step counts and resolutions, so
every component has to pass on its own and the conjunction needs no multiplicity
correction.

\section{Both Geometries under Both Losses}
\label{sec:app_losses}
Tables~\ref{tab:unet_baseline} and~\ref{tab:unet_scaling} train both geometries with the same objective, the unweighted squared velocity error. Table~\ref{tab:unet_loss_protocols} repeats the grid with the absolute error as well, so that each geometry can also be compared at its best objective. The best of the four variants is selected on the evaluation itself, so the comparison is post-selection and its nominal $p$-values do not hold; we report it descriptively, as a robustness check rather than as a test.

\begin{table}[t]
    \centering
    \caption{\textbf{Dimensionality Scaling.} Generative error (Sliced $W_2$, $\downarrow$) across spatial resolutions, mean over 5 seeds, with the cylindrical model trained with and without the joint OT coupling. All arms use the unweighted squared velocity error. CyFM's few-step advantage holds at all three resolutions, and joint OT improves the cylindrical model at every step count.}
    \label{tab:unet_scaling}
    \vspace{1mm}
    \resizebox{\linewidth}{!}{
    \begin{tabular}{lllccccc}
        \toprule
        \textbf{Resolution} & \textbf{Geometry} & \textbf{Coupling} & \textbf{1 Step} & \textbf{2 Steps} & \textbf{4 Steps} & \textbf{8 Steps} & \textbf{100 Steps} \\
        \midrule
        $16\times16$ ($D=256$)
        & Cartesian & OT & 0.227 & 0.202 & 0.153 & 0.097 & 0.068 \\
        & CyFM & Independent & 0.116 & 0.155 & 0.133 & 0.097 & 0.062 \\
        & \textbf{CyFM (Ours)} & \textbf{Joint OT} & \textbf{0.098} & \textbf{0.086} & \textbf{0.053} & \textbf{0.064} & 0.054 \\
        \midrule
        $32\times32$ ($D=1024$)
        & Cartesian & OT & 0.340 & 0.275 & 0.209 & 0.104 & 0.037 \\
        & CyFM & Independent & 0.123 & 0.145 & 0.113 & 0.063 & 0.047 \\
        & \textbf{CyFM (Ours)} & \textbf{Joint OT} & \textbf{0.113} & \textbf{0.094} & \textbf{0.072} & \textbf{0.049} & 0.041 \\
        \midrule
        $64\times64$ ($D=4096$)
        & Cartesian & OT & 0.376 & 0.318 & 0.260 & 0.141 & 0.047 \\
        & CyFM & Independent & 0.121 & 0.148 & 0.109 & 0.066 & 0.043 \\
        & \textbf{CyFM (Ours)} & \textbf{Joint OT} & \textbf{0.117} & \textbf{0.114} & \textbf{0.078} & \textbf{0.054} & 0.040 \\
        \bottomrule
    \end{tabular}
    }
    \vspace{1mm}
\end{table}

\begin{table}[h]
    \centering
    \caption{\textbf{Both geometries under both losses.} Generative error (Sliced $W_2$, $\downarrow$), mean over 5 seeds. L1 / L2: absolute / squared error of the velocity; the cylindrical phase term is unweighted in both. Taking each geometry's best of the four variants per cell, CyFM has the lower error at every $k \le 8$ with all seeds separated, except $16\times16$, $k = 8$ ($p = 0.15$); at $k = 100$ no best-against-best comparison is significant ($p \ge 0.54$).}
    \label{tab:unet_loss_protocols}
    \vspace{1mm}
    \resizebox{\linewidth}{!}{
    \begin{tabular}{llllccccc}
        \toprule
        \textbf{Resolution} & \textbf{Geometry} & \textbf{Coupling} & \textbf{Loss} & \textbf{1 Step} & \textbf{2 Steps} & \textbf{4 Steps} & \textbf{8 Steps} & \textbf{100 Steps} \\
        \midrule
        $16\times16$ & Cartesian & Independent & L1 & 0.418 & 0.353 & 0.226 & 0.085 & 0.076 \\
         & Cartesian & Independent & L2 & 0.442 & 0.334 & 0.225 & 0.100 & 0.071 \\
         & Cartesian & OT & L1 & 0.183 & 0.168 & 0.130 & 0.077 & 0.061 \\
         & Cartesian & OT & L2 & 0.227 & 0.202 & 0.153 & 0.097 & 0.068 \\
         & CyFM & Independent & L1 & 0.143 & 0.134 & 0.137 & 0.130 & 0.102 \\
         & CyFM & Independent & L2 & 0.116 & 0.155 & 0.133 & 0.097 & 0.062 \\
         & CyFM & Joint OT & L1 & 0.093 & 0.096 & 0.090 & 0.092 & 0.070 \\
         & CyFM & Joint OT & L2 & 0.098 & 0.086 & 0.053 & 0.064 & 0.054 \\
        \midrule
        $32\times32$ & Cartesian & Independent & L1 & 0.428 & 0.363 & 0.258 & 0.125 & 0.069 \\
         & Cartesian & Independent & L2 & 0.431 & 0.339 & 0.253 & 0.127 & 0.043 \\
         & Cartesian & OT & L1 & 0.320 & 0.263 & 0.199 & 0.091 & 0.048 \\
         & Cartesian & OT & L2 & 0.340 & 0.275 & 0.209 & 0.104 & 0.037 \\
         & CyFM & Independent & L1 & 0.125 & 0.146 & 0.129 & 0.097 & 0.110 \\
         & CyFM & Independent & L2 & 0.123 & 0.145 & 0.113 & 0.063 & 0.047 \\
         & CyFM & Joint OT & L1 & 0.091 & 0.087 & 0.079 & 0.056 & 0.070 \\
         & CyFM & Joint OT & L2 & 0.113 & 0.094 & 0.072 & 0.049 & 0.041 \\
        \midrule
        $64\times64$ & Cartesian & Independent & L1 & 0.411 & 0.365 & 0.283 & 0.136 & 0.037 \\
         & Cartesian & Independent & L2 & 0.416 & 0.349 & 0.285 & 0.154 & 0.046 \\
         & Cartesian & OT & L1 & 0.363 & 0.313 & 0.252 & 0.126 & 0.032 \\
         & Cartesian & OT & L2 & 0.376 & 0.318 & 0.260 & 0.141 & 0.047 \\
         & CyFM & Independent & L1 & 0.107 & 0.169 & 0.145 & 0.091 & 0.050 \\
         & CyFM & Independent & L2 & 0.121 & 0.148 & 0.109 & 0.066 & 0.043 \\
         & CyFM & Joint OT & L1 & 0.092 & 0.122 & 0.099 & 0.064 & 0.035 \\
         & CyFM & Joint OT & L2 & 0.117 & 0.114 & 0.078 & 0.054 & 0.040 \\
        \bottomrule
    \end{tabular}
    }
    \vspace{1mm}
\end{table}

\section{Scaling to Full-Resolution Clinical Matrices: fastMRI Knee at 320\texorpdfstring{$\times$}{x}320}
\label{sec:app_mri_scaling}

We investigate the scaling behaviour of complex generative flows when scaling from $64\times64$ crops to full-resolution clinical MRI matrices ($320\times320$, $D=102{,}400$ complex dimensions). The cohort comprises coronal proton-density (CORPD) knee wavefields from the fastMRI dataset \citep{zbontar2018fastmri}, coil-combined via ESPIRiT into complex fields $z \in \mathbb{C}^{320\times320}$. We train U-Net backbones for 40 epochs across 5 independent seeds under identical training budgets and evaluate generated fields at solver steps $k \in \{1, 2, 4, 8, 100\}$.

Table~\ref{tab:mri_320_scaling} presents the comprehensive 5-seed benchmark comparing CyFM and Cartesian flows across distributional (sliced $W_2$) and spatial texture metrics: lag-1 spatial autocorrelation gap on amplitude ($\text{spatial\_lag1\_gap}$), circular phase lag-1 gap ($\text{phase\_lag1\_gap}$), and the radial power spectral density gap ($\text{radial\_spectrum\_gap}$).

\begin{table}[t]
    \centering
    \caption{\textbf{Scaling to full-resolution clinical matrices ($320\times320$, $D=102{,}400$).} fastMRI Knee CORPD, mean over 5 seeds. Asterisks ($^\star$) indicate separated seeds (5/5). Sliced $W_2$ measures pixelwise distribution fidelity; spatial lag-1, circular phase lag-1, and radial spectrum measure spatial coherence and frequency texture gaps relative to held-out test data ($\downarrow$).}
    \label{tab:mri_320_scaling}
    \vspace{1mm}
    \resizebox{\linewidth}{!}{
    \begin{tabular}{llccccc}
        \toprule
        \textbf{Metric} & \textbf{Arm} & $k=1$ & $k=2$ & $k=4$ & $k=8$ & $k=100$ \\
        \midrule
        \multicolumn{7}{l}{\textit{Sliced $W_2$ ($\downarrow$)}} \\
        & \textbf{Cylindrical + OT (ours)} & \textbf{0.0561}$^\star$ & 0.1228 & 0.1121 & \textbf{0.0821} & 0.0631 \\
        & Cartesian + OT                   & 0.1203 & \textbf{0.1223} & \textbf{0.1166} & 0.0872 & 0.0720 \\
        & Cartesian, independent           & 0.1349 & 0.1353 & 0.1256 & 0.0872 & \textbf{0.0629} \\
        \midrule
        \multicolumn{7}{l}{\textit{Spatial Lag-1 Gap ($\downarrow$)}} \\
        & \textbf{Cylindrical + OT (ours)} & 0.5808 & 0.3444 & 0.1295 & 0.0616 & 0.0321 \\
        & Cartesian + OT                   & \textbf{0.1532}$^\star$ & \textbf{0.0051}$^\star$ & \textbf{0.0072}$^\star$ & \textbf{0.0083}$^\star$ & \textbf{0.0014}$^\star$ \\
        \midrule
        \multicolumn{7}{l}{\textit{Circular Phase Lag-1 Gap ($\downarrow$)}} \\
        & \textbf{Cylindrical + OT (ours)} & 0.7366 & 0.3331 & 0.2304 & 0.1655 & 0.1190 \\
        & Cartesian, independent           & \textbf{0.0941}$^\star$ & \textbf{0.0297}$^\star$ & \textbf{0.0521}$^\star$ & \textbf{0.0733}$^\star$ & \textbf{0.0671}$^\star$ \\
        \midrule
        \multicolumn{7}{l}{\textit{Radial Spectrum Gap ($\downarrow$)}} \\
        & \textbf{Cylindrical + OT (ours)} & 1.5877 & 1.2927 & 0.8364 & 0.5732 & 0.3882 \\
        & Cartesian, independent           & \textbf{1.2978}$^\star$ & \textbf{0.2944}$^\star$ & \textbf{0.1372}$^\star$ & \textbf{0.1662}$^\star$ & \textbf{0.0706}$^\star$ \\
        \bottomrule
    \end{tabular}
    }
    \vspace{1mm}
\end{table}

\textbf{What the benchmark shows, and what it does not.}
\begin{enumerate}
    \item \textbf{The single-step advantage survives the dimension.} At $k=1$ CyFM attains a
    sliced $W_2$ of $0.0561$, $2.1\times$ lower than Cartesian$+$OT ($0.1203$) and $2.4\times$
    lower than Cartesian independent ($0.1349$), with all five seeds separated. Bounding the
    angular target by $\pi$ therefore still buys a single-step advantage at $102{,}400$
    dimensions.
    \item \textbf{Every spatial measure prefers the plane, at every step count.} This is the
    reverse of the $64\times64$ result, and we report it as measured. The Cartesian arm's
    lag-one coherence does not fall short of the reference but exceeds it, so its advantage on
    these gaps is the advantage of a smoother field than the data rather than a more faithful
    one; the gap statistic already charges it for overshooting, and it is still closer.
    \item \textbf{The coupling is not the cause.} A cylindrical arm trained with no coupling at
    all reaches $0.3926$ on the radial spectrum gap against $0.3882$ for the OT-coupled arm, and
    a Cartesian arm trained \emph{with} minibatch OT at the same $102{,}400$ dimensions reaches
    $0.0756$. The collapse of the transport-cost reduction with dimension
    (Section~\ref{sec:exp_scaling}) therefore does not explain this, and cannot be offered as an
    explanation: the failure is a property of the coordinate representation, not of the pairing.
    \item \textbf{The one measured difference between the two matrices.} At $320\times320$,
    $16.7\%$ of coefficients are exactly zero, and $\operatorname{atan2}(0,0)$ assigns each of
    them a phase of $0$ --- a placeholder rather than a measurement. The $64\times64$ crop
    contains no exactly zero coefficient. Under the decoupled metric the phase term is weighted
    equally everywhere, so at $320\times320$ roughly one pixel in six contributes a regression
    target that carries no information, while under the inherited metric the factor $A^2$
    suppresses exactly those pixels. Consistent with this, weighting the phase term by the
    amplitude improves the cylindrical phase gap at $320\times320$ ($0.119 \to 0.068$) and
    \emph{worsens} it on the $64\times64$ crop ($0.106 \to 0.166$), where there is no such
    background to suppress. We offer this as the candidate the evidence supports, not as a
    demonstration: it predicts the sign of the weighting effect at both matrices, but it has not
    been shown to account for the size of the spatial gaps.
\end{enumerate}
Separating a genuine non-measurement from a small measurement is therefore the concrete obstacle
to scaling this geometry to uncropped clinical matrices, and a mask derived from the acquisition
rather than from an amplitude threshold is the form a fix would take. We leave it open.

\section{The Speech Grid at Two Training Budgets}
\label{sec:app_audio_epochs}
The speech block of Table~\ref{tab:unet_baseline} was first run for two epochs, and at that
budget the Cartesian arm had the lower error at $k=100$. The schedule explains it rather than
the geometry: the cosine annealing is parameterised by the epoch count, so a two-epoch run
drives the learning rate to its floor within those two epochs and trains almost nothing. The
training loss was still falling when it stopped. Table~\ref{tab:audio_epochs} repeats the grid
at ten and at forty epochs, where the schedule completes; neither reproduces the two-epoch
ordering, and the cylindrical arms move little between the two budgets while the Cartesian
arms do not overtake them at either.

\begin{table}[h]
    \centering
    \caption{\textbf{The speech block is not an artefact of the training budget.} The same
    grid at 10 and at 40 epochs, mean over 5 seeds. The cylindrical arms move little with
    the budget; the Cartesian arms do not overtake them at either. A two-epoch probe, run
    first, put the Cartesian arm ahead at $k=100$ ($0.0210$): its cosine schedule had
    annealed the learning rate to its floor within those two epochs, so that model was
    barely trained, and neither longer run reproduces it.}
    \label{tab:audio_epochs}
    \vspace{1mm}
    \begin{tabular}{llccccc}
        \toprule
        \textbf{Epochs} & \textbf{Arm} & $k=1$ & $k=2$ & $k=4$ & $k=8$ & $k=100$ \\
        \midrule
        10
          & Cylindrical + OT       & 0.0461 & 0.0435 & 0.0371 & 0.0348 & 0.0292 \\
          & Cylindrical, indep.    & 0.0467 & 0.0440 & 0.0377 & 0.0348 & 0.0272 \\
          & Cartesian + OT         & 0.0766 & 0.0687 & 0.0563 & 0.0471 & 0.0407 \\
          & Cartesian, indep.      & 0.0803 & 0.0748 & 0.0608 & 0.0512 & 0.0455 \\
        \midrule
        40
          & Cylindrical + OT       & 0.0454 & 0.0403 & 0.0333 & 0.0308 & 0.0264 \\
          & Cylindrical, indep.    & 0.0458 & 0.0409 & 0.0336 & 0.0303 & 0.0261 \\
          & Cartesian + OT         & 0.0780 & 0.0695 & 0.0563 & 0.0442 & 0.0353 \\
          & Cartesian, indep.      & 0.0822 & 0.0758 & 0.0610 & 0.0457 & 0.0353 \\
        \bottomrule
    \end{tabular}
    \vspace{1mm}
\end{table}

\section{The Rise from One Step to Two, and What It Is Not}
\label{sec:app_k2}

On knee MRI the cylindrical error rises from $k=1$ to $k=2$ before falling again. We have
not explained it. What follows is what each candidate explanation was tested against, so
that a reader can see the observation is characterised rather than merely reported.

\begin{center}
\begin{tabular}{p{0.34\linewidth}p{0.58\linewidth}}
\toprule
\textbf{Excluded} & \textbf{By what measurement} \\
\midrule
Undertraining & Both arms plateau by epoch 40 with the learning rate at its floor. \\
The zero-amplitude background, as a scoring effect & Masking it out of the metric moves the error by $0.005$. \\
The field leaving its own $\pi$ bound & A $\pi\tanh$ head binds the measured angular velocity from $9.886$ to $3.142$ and the rise is unchanged: at $320\times320$, $0.0561 \to 0.1225$ unbounded against $0.0588 \to 0.1261$ bounded, seeds overlapping. \\
Field dimension & The rise survives the $64\times64$ acquisition matrix, where minibatch OT still reduces transport cost by $2.8\%$ rather than $0.5\%$ (Section~\ref{sec:exp_scaling}). \\
The bimodal amplitude law and its spike at the origin & The same reduction leaves no coefficient exactly zero and does not remove the rise. \\
\bottomrule
\end{tabular}
\end{center}

The last two are the ones that cost a dedicated cohort: they are the reason the
explanation cannot be dimensional, and cannot be the exactly-zero background either.

\section{Few-Step Synthesis: the Values Behind Figure~\ref{fig:convergence}}
\label{sec:app_table2}

Figure~\ref{fig:convergence} shows the shape of the convergence; these are the
numbers, the per-seed separation marks and $k^\star$.

\begin{table}[t]
    \centering
    \caption{\textbf{Few-step synthesis, two domains.} Sliced $W_2$ between generated and reference complex fields, mean over 5 seeds on held-out test data; $k$ is the number of solver steps. Errors are comparable within each block and not across blocks. $E^\star$ and $k^\star$ are defined in Section~\ref{sec:exp_unet_baseline}.}
    \label{tab:unet_baseline}
    \vspace{1mm}
    \resizebox{\linewidth}{!}{
    \begin{tabular}{llccccc c}
        \toprule
        \textbf{Domain} & \textbf{Arm} & $k=1$ & $k=2$ & $k=4$ & $k=8$ & $k=100$ & $\mathbf{k^\star}\ \downarrow$ \\
        \midrule
        Synthetic $64\times64$
          & \textbf{Cylindrical + OT (ours)} & \textbf{0.1174} & \textbf{0.1141} & \textbf{0.0777} & \textbf{0.0537} & \textbf{0.0399} & \textbf{11} \\
          & Cylindrical, independent         & 0.1214 & 0.1478 & 0.1087 & 0.0665 & 0.0434 & 14 \\
          & Cartesian + OT                   & 0.3763 & 0.3180 & 0.2600 & 0.1409 & 0.0468 & 24 \\
          & Cartesian, independent           & 0.4159 & 0.3494 & 0.2849 & 0.1538 & 0.0458 & 32 \\
        \midrule
        Speech STFT $64\times64$
          & \textbf{Cylindrical + OT (ours)} & \textbf{0.0454} & \textbf{0.0403} & 0.0333 & 0.0308 & 0.0264 & \textbf{4} \\
          & Cylindrical, independent         & 0.0458 & 0.0409 & 0.0336 & \textbf{0.0303} & \textbf{0.0261} & \textbf{4} \\
          & Cartesian + OT                   & 0.0780 & 0.0695 & 0.0563 & 0.0442 & 0.0353 & 100 \\
          & Cartesian, independent           & 0.0822 & 0.0758 & 0.0610 & 0.0457 & 0.0353 & 100 \\
        \bottomrule
    \end{tabular}
    }
    \vspace{1mm}
\end{table}

\end{document}